\theoremstyle{plain}
\newtheorem{corollary}{Corollary}
\newtheorem{proposition}{Proposition}
\theoremstyle{definition}
\newtheorem{definition}{Definition}
\theoremstyle{remark}
\newtheorem{example}{Example}
\title{Contrastive Explanations for Ar\-gu\-men\-tation-Based Conclusions}
\author[1]{AnneMarie Borg}
\author[1,2]{Floris Bex}
\affil[1]{Department of Information and Computing Sciences, Utrecht University}
\affil[2]{Tilburg Institute for Law, Technology, and Society, Tilburg University}
\date{}
\def\blfootnote{\xdef\@thefnmark{}\@footnotetext}
\begin{document}
\maketitle

\begin{abstract}
In this paper we discuss \emph{contrastive explanations} for formal argumentation -- the question why a certain argument (the fact) can be accepted, whilst another argument (the foil) cannot be accepted under various extension-based semantics. The recent work on explanations for argumentation-based conclusions has mostly focused on providing minimal explanations for the (non-)acceptance of arguments. What is still lacking, however, is a proper argumentation-based interpretation of contrastive explanations. We show under which conditions contrastive explanations in abstract and structured argumentation are meaningful, and how argumentation allows us to make implicit foils explicit.
\end{abstract}

\section{Introduction}

\emph{Explainable AI} (XAI) has become an important research direction in AI~\cite{SamekMVHM19towardsXAI}. AI systems are being applied in a variety of real-life situations in different domains and with different users. It is therefore essential that such systems are able to give explanations that provide insight into the underlying decision models and techniques, so that users can understand, trust and validate the system, and experts can verify that the system works as intended. Most of the research in XAI is directed at explaining decisions of subsymbolic machine learning algorithms~(cf.~\cite{SamekWM17explainable}), but explanations also play an important role in clarifying the decisions of symbolic algorithms~\cite{LacaveD04review}, particularly as such algorithms are all-pervasive in everyday systems. \blfootnote{Full version of an extended abstract forthcoming in Proceedings of the 21st International Conference
on Autonomous Agents and Multiagent Systems (AAMAS 2022).}

One area in symbolic AI that has seen a number of real-world applications is formal argumentation~\cite{Atkinson17applications}. Two central concepts in formal argumentation are \emph{abstract argumentation frameworks} \cite{Dung95}, sets of arguments and the attack relations between them, and \emph{structured} or \emph{logical argumentation frameworks}~\cite{BesnardGHMPS14intro}, where arguments are constructed from a knowledge base and a set of rules, and the attack relation is based on the individual elements in the arguments. Common for argumentation frameworks, abstract and structured, is that we can determine their extensions, sets of arguments that can collectively be considered as acceptable, under different semantics~\cite{Dung95}. The combination of an argumentation framework and its extensions is a \emph{global} explanation: what can we conclude from the model as a whole? However, often we would prefer simpler, more compact explanations for the acceptability of an individual argument, a \emph{local} explanation for a particular decision or conclusion~\cite{EdwardsV17slave}. A number of methods for determining local explanations for the (non-)ac\-cep\-ta\-bility of arguments have been proposed~\cite{BorgB20basic,FanT15accept,FanT15nonaccept,GarciaCRS13dialecticalexpl,LiaovdT20,SaribaturWW20}. What is still lacking, however, is an argumentation-based interpretation of \emph{contrastive explanations}.

Contrastiveness is central to local explanations \cite{Lipton90,Miller18,Miller19}: when people ask \emph{`Why P?'}, they often mean \emph{`Why P rather than Q?'} -- here $P$ is called the \emph{fact} and $Q$ is called the \emph{foil}~\cite{Lipton90}. The answer to the question is then to explain as many of the differences between fact and foil as possible. Like for XAI in general, much of the research on contrastive explanations is done in the context of machine learning (e.g. \cite{DhurandharCLTTSD18,StepinACP21contrastive,vdWaaRDBN18}). In the literature on formal argumentation, there has been no such work, the existing work focusing on \emph{`Why is argument A (not) acceptable?'} instead of the contrastive question \emph{`Why is argument A acceptable and argument B not?'} (or vice versa). While there are other forms of contrastive questions, we choose this one since it is intuitive, allows for a variety of foils and it can be interesting for both expert and lay users of an application. 

To study contrastive explanations for argumentation-based conclusions, we extend the basic framework from~\cite{BorgB20basic}. With that framework, explanations for accepted and non-accepted arguments or formulas can be formulated in a variety of ways.  The main idea of the introduced contrastive explanations is that these return the common elements of the basic acceptance explanation of the fact and the basic non-acceptance explanation of the foil (or vice versa). We show that in almost all situations these explanations are meaningful, i.e., that such common elements exist. Additionally we show that, due to the explicit notion of conflict within argumentation, we can provide contrastive explanations when the foil is not explicitly known. This is an advantage of formal argumentation, since determining the foil is a challenge for an AI system.

The paper is structured as follows: we briefly discuss some directly related work and provide an example of a real-life application that benefits from our contrastive explanations. We recall abstract argumentation as introduced in~\cite{Dung95} in Section~\ref{sec:Preliminaries}. Then, in Section~\ref{sec:Explanations}, the framework from~\cite{BorgB20basic} is recalled and some new results for acceptance and non-acceptance explanations are shown. In Section~\ref{sec:Contrastive} contrastive explanations are introduced and it is shown how, in formal argumentation, the foil can be determined when it is not explicitly stated. In Section~\ref{sec:Structured} the introduced contrastive explanations are applied to ASPIC$^+$~\cite{Prakken10} and we conclude in Section~\ref{sec:Conclusion}.

\section{Related Work}
\label{sec:RelatedWork}

XAI has been investigated in many directions, for a variety of approaches to AI, including formal argumentation. 
As mentioned in the introduction, we are interested in \emph{contrastive local explanations for conclusions derived from formal argumentation}, where the idea is that the proposed method can be applied to any Dung-style argumentation framework to generate contrastive explanations. While contrastive explanations for learning-based decisions have been investigated extensively (see~\cite{StepinACP21contrastive} for a recent overview), there are no results on contrastive explanations for argumentation-based conclusions. 

Some research on local explanations for argumentation-based conclusions already exists. For example,~\cite{GarciaCRS13dialecticalexpl} introduce explanations for claims as triples of sets of dialectical trees for abstract argumentation and DeLP and Fan and Toni introduced explanations as dispute trees for accepted arguments in abstract argumentation and ABA in~\cite{FanT15accept} and for non-accepted arguments in abstract argumentation in~\cite{FanT15nonaccept}. Even more recently, explanation semantics, where accepted arguments are labeled with sets of explanation arguments, were introduced in~\cite{LiaovdT20} and explanations for non-accepted arguments as minimal subframeworks are studied in~\cite{NiskanenJ20smallexpl,SaribaturWW20}.  

For this paper, we take the framework from~\cite{BorgB20basic}, as it is the only one that allows for acceptance and non-acceptance explanations in terms of sets of arguments. While acceptance and non-acceptance is necessary when defining contrastive explanations (see Section~\ref{sec:Contrastive}), explanations in terms of sets of arguments make it easier to process the explanations. Additionally, unlike the other frameworks, the explanations from~\cite{BorgB20basic} make it possible to present explanations derived from a structured setting in terms of elements of arguments (e.g., premises or rules), rather than full arguments. Therefore, to the best of our knowledge, this is the first research on \emph{contrastive} local explanations for conclusions derived from either abstract or structured formal argumentation. 

\section{Example Scenario}
\label{sec:Applying}

At the Netherlands Police several argumentation-based applications have been implemented~\cite{BexTP16iac}. These applications are aimed at assisting the police at working through high volume tasks, leaving more time for tasks that require human attention.  
As for any AI application, these applications should be able to provide an explanation for the derived decision. In this paper we will illustrate how the proposed contrastive explanations can be applied in an application that helps the police to identify malafide webshops~\cite{OdekerkenB20webshop}.\footnote{We work with an adjusted version of the actual system to make the conclusions more interesting from an argumentative perspective and since the system cannot be published.} 

Suppose that a complaint has been filed ($\textit{cf}$) about a webshop, that it is malafide ($m$). Usually, when a complaint is filed, an investigation into the webshop is done ($\textit{iw}$, rule $d_1$) and, when an investigation is done and it is found that the url of the webshop is suspicious ($\textit{sa}$), the webshop is found to be malafide (rule $d_3$). Now, a complaint can be retracted ($\textit{rc}$) in which case $d_1$ is not applicable (rule $d_2$), unless the owner of the webshop is known by the police ($\textit{kp}$, rule $d_5$). Similarly, if the address is registered at the chamber of commerce ($\textit{ka}$) then rule $d_3$ is not applicable (rule $d_4$), unless the registration was recently retracted ($\textit{rr}$, rule $d_6$). 

We formalize this scenario by creating the following arguments: 
\begin{itemize}
    \item[$A_1$] \emph{cf}: a complaint has been filed;
    \item[$A_2$] \emph{rc}: the complaint has been retracted;
    \item[$A_3$] \emph{sa}: the url of the webshop is suspicious;
    \item[$A_4$] \emph{ka}: the url is registered at the chamber of commerce;
    \item[$A_5$] \emph{kp}: the owner of the webshop is known by the police;
    \item[$A_6$] \emph{rr}: the registration was recently retracted;
    \item[$B_1$] \emph{iw}: an investigation into the webshop is done;
    \item[$B_2$] $\neg n(d_1)$: the rule that, when a complaint is filed an investigation into the webshop is done, is not applicable;
    \item[$B_3$] $\neg\textit{rc}$: the complaint cannot be retracted;
    \item[$B_4$] \emph{m}: the webshop is malafide;
    \item[$B_5$] $\neg n(d_3)$: the rule that if an invesitation into the webshp is done and the url of the webshop is suspicious, then the webshop is malafide, is not applicable;
    \item[$B_6$] $\neg\textit{ka}$: the url is not registered at the chamber of commerce. 
\end{itemize}
These arguments give rise to the following conflicts: $A_2$ and $B_3$ are in conflict with each other and, similarly, $A_4$ and $B_6$. $B_3$ [resp.\ $B_6$] causes a conflict with $B_2$ [resp.\ $B_5$] since \emph{rc} [resp.\ \emph{ka}] was used in the construction of $B_2$ [resp.\ $B_5$]. Finally, $B_2$ [resp.\ $B_5$] causes a conflict with $B_1$ and $B_4$ [resp.\ $B_4$] since $B_4$ (and $B_1$) is constructed with the use of rules $d_1$ and $d_3$. The resulting graphical representation can be found in Figure~\ref{fig:Webshop}, where the nodes represent the arguments and the arrows represent the attacks.

\begin{figure}[ht]
  \centering
  \begin{tikzpicture}
    \node[draw,circle] (A1) at (0,4) {$A_1$};
    \node[draw,circle] (A2) at (2,4) {$A_2$};
    \node[draw,circle] (A3) at (4,4) {$A_3$};
    \node[draw,circle] (A4) at (6,4) {$A_4$};
    \node[draw,circle] (A5) at (8,4) {$A_5$};
    \node[draw,circle] (A6) at (8,2.5) {$A_6$};
    
    \node[draw,circle] (B1) at (0,1) {$B_1$};
    \node[draw,circle] (B2) at (2,1) {$B_2$};
    \node[draw,circle] (B3) at (2,2.5) {$B_3$};
    \node[draw,circle] (B4) at (4,1) {$B_4$};
    \node[draw,circle] (B5) at (6,1) {$B_5$};
    \node[draw,circle] (B6) at (6,2.5) {$B_6$};
    
    \draw[->] (B2) -- (B1); 
    \draw[->] (B2) -- (B4);
    \draw[->] (B5) -- (B4);
    \draw[->] (B3) [bend left] to (A2);
    \draw[->] (A2) [bend left] to (B3);
    \draw[->] (B6) [bend left] to (A4);
    \draw[->] (A4) [bend left] to (B6);
    \draw[->] (B3) -- (B2);
    \draw[->] (B6) -- (B5);
  \end{tikzpicture}
  \caption{Graphical representation of the AF $\calAF_1$.}
  \label{fig:Webshop}
\end{figure}

In this scenario, the webshop about which a complaint is filed is malafide if the complaint is not retracted (or the owner of the webshop is known by the police) and the url of the webshop is suspicious and not currently registered at the chamber of commerce. As a result, there are several criteria which can make a webshop malafide, each with their own exceptions. There might therefore be a variety of reasons for a given conclusion. In this paper we will show how explanations can be tailored to a specific reason. 

\section{Preliminaries}
\label{sec:Preliminaries}

We focus on explanations for conclusions derived from Dung-style argumentation frameworks. This section is very compact, see, e.g.,~\cite{Dung95} for a more gentle introduction. 

An \emph{abstract argumentation framework\/} (AF)~\cite{Dung95} is a pair $\calAF = \tuple{\Args, \attack}$, where $\Args$ is a set of \emph{arguments\/} and $\attack\subseteq\Args\times\Args$ is an \emph{attack relation\/} on these arguments. An argumentation framework can be viewed as a directed graph, in which the nodes represent arguments and the arrows represent the attacks, see also Figure~\ref{fig:Webshop}. 

\begin{example}
  \label{ex:abstractAF}
  Figure~\ref{fig:Webshop} represents the argumentation framework $\calAF_1 = \tuple{\Args_1,\attack_1}$ where $\Args_1 = \{A_1,A_2,A_3,A_4,A_5,A_6,B_1,\allowbreak B_2,\allowbreak B_3,\allowbreak B_4,B_5,B_6\}$ and $\attack_1 = \{(A_2,\allowbreak B_3),\allowbreak (A_4,\allowbreak B_6),(B_2,B_1),(B_2,B_4),\allowbreak (B_3,A_2),(B_3,B_2),(B_5,B_4),\allowbreak (B_6, \allowbreak A_4), (B_6,\allowbreak B_5)\}$. 
\end{example}

%
%
%

Dung-style semantics~\cite{Dung95} can be applied to an AF, to determine the sets of arguments (called \emph{extensions}) that can be accepted. 

\begin{definition}
  \label{def:extensions}
  Given an argumentation framework $\calAF = \tuple{\Args,\attack}$, 
  \begin{itemize} 
    \item $\sfS\subseteq\Args$ \emph{attacks} $A\in\Args$ if there is an $A'\in\sfS$ such that $(A',A)\in\attack$; 
    \item $\sfS$ \emph{defends} $A$ if $\sfS$ attacks every attacker of $A$; 
    \item $\sfS$ is \emph{conflict-free} if there are no $A_1,A_2\in\sfS$ such that $(A_1,A_2)\in\attack$; and 
    \item $\sfS$ is \emph{admissible} ($\Adm$) if it is conflict-free and it defends all of its elements. 
  \end{itemize}
  We denote by $\sfS^+$ the set of all arguments attacked by $\sfS$. An admissible set that contains all the arguments that it defends is a \emph{complete extension} ($\Cmp$). 
  \begin{itemize}
      \item The \emph{grounded extension} ($\Grd$) of $\calAF$ is the minimal (with respect to $\subseteq$) complete extension; 
      \item A \emph{preferred extension} ($\Prf$) of $\calAF$ is a maximal (with respect to $\subseteq$) complete extension; and 
      \item A \emph{semi-stable extension} ($\Sstb$) of $\calAF$ is a complete extension $\sfS$ where $\sfS\cup\sfS^+$ is maximal. 
  \end{itemize}
  An extension will be denoted by $\ext$ and $\Sem(\calAF)$ denotes the set of all the extensions of $\calAF$ under the semantics $\Sem\in\{\Adm,\allowbreak\Cmp,\allowbreak\Grd,\allowbreak\Prf,\allowbreak\Sstb\}$. 
\end{definition}

\noindent
In what follows, given an argumentation framework $\calAF$, we will denote:
\begin{itemize} 
    \item $\Sem\ExtWith(A) = \{\ext\in\Sem(\calAF)\mid A\in\ext\}$ the set of all $\Sem$-extensions of $\calAF$ of which $A$ is a member and 
    \item $\Sem\ExtWithout(A) = \{\ext\in\Sem(\calAF)\mid A\notin\ext\}$ the set of all $\Sem$-extensions of $\calAF$ of which $A$ is not a member.
\end{itemize}

\begin{definition}
  \label{def:Acceptance}
  Let $\calAF = \tuple{\Args,\attack}$ be an argumentation framework, $A\in\Args$ and $\Sem\in\{\Adm,\allowbreak \Grd,\allowbreak \Cmp,\allowbreak \Prf,\allowbreak \Sstb\}$. It is said that, for $\Sem(\calAF)\neq\emptyset$, $A$ is, w.r.t.\ $\Sem$:
  \begin{itemize}
    \item \emph{skeptically accepted} iff $\Sem\ExtWith(A) = \Sem(\calAF)$; 
    \item \emph{credulously accepted} iff $\Sem\ExtWith(A) \neq \emptyset$; 
    \item \emph{not skeptically accepted} iff $\Sem\ExtWithout(A) \neq\emptyset$; 
    \item \emph{not credulously accepted} iff $\Sem\ExtWithout(A) = \Sem(\calAF)$. 
  \end{itemize}
  We will denote skeptical [resp.\ credulous] (non-)acceptance by $\cap$ [resp.\ $\cup$] and when $\cap$ or $\cup$ is clear from the context or not relevant simply write \emph{accepted} and \emph{non-accepted}.
\end{definition}

The notions of attack and defense can also be defined between arguments and can be generalized to indirect versions: given an argumentation framework $\calAF = \tuple{\Args,\attack}$: 
\begin{itemize} 
    \item $A\in\Args$ \emph{defends} $B\in\Args$ if: there is some $C\in\Args$ such that $(C,B)\in\attack$ and $(A,C)\in\attack$, in this case $A$ \emph{directly defends $B$}; or $A$ defends $C\in\Args$ and $C$ defends $B$, in this case  $A$ \emph{indirectly defends $B$}. It is said that \emph{$A$ defends $B$ in $\ext$} if $A$ defends $B$ and $A\in\ext$. 
    \item Similarly, $A\in\Args$ \emph{attacks} $B\in\Args$ if: $(A,B)\in\attack$, in this case $A$ \emph{directly attacks} $B$; or $A$ attacks some $C\in\Args$ and $C$ defends $B$, in this case $A$ \emph{indirectly attacks} $B$.
\end{itemize}

We will require that an explanation for an argument $A$ is \emph{relevant}, in order to prevent that explanations contain arguments that do not influence the acceptance of $A$.

\begin{definition} 
    \label{def:relevance}
    Let $\calAF = \tuple{\Args,\attack}$ and $A,B\in\Args$. It is said that $A$ is \emph{relevant} for $B$ if $A$ (in)directly attacks or defends $B$ and $A$ does not attack itself. A set $\sfS\subseteq\Args$ is relevant for $B$ if all of its arguments are relevant for $B$. A relevant argument $A$ for $B$ is \emph{conflict-relevant} for $B$ if $A$ (in)directly attacks $B$ and it is \emph{defending-relevant} for $B$ if $A$ (in)directly defends $B$. 
\end{definition}

\begin{example}
  \label{ex:abstractAFextensions}
  In $\calAF_1$ $A_2$ and $B_3$ attack each other and both defend themselves. Example conflict-free sets are $\{A_2,B_2\}$ and $\{A_2,B_5\}$. There are four preferred and semi-stable extensions: $\ext_1 = \{A_1,\allowbreak A_2,\allowbreak A_3,\allowbreak A_4,\allowbreak A_5,\allowbreak A_6,\allowbreak B_2,\allowbreak B_5\}$, $\ext_2 = \{A_1,\allowbreak A_2,\allowbreak A_3,\allowbreak A_5,\allowbreak A_6,\allowbreak B_2,\allowbreak B_6\}$, $\ext_3 = \{A_1,\allowbreak A_3,\allowbreak A_5,\allowbreak A_4,\allowbreak A_6,\allowbreak B_1,\allowbreak B_3,\allowbreak B_5\}$ and $\ext_4 = \{A_1,\allowbreak A_3,\allowbreak A_5,\allowbreak A_6,\allowbreak B_1,\allowbreak B_3,\allowbreak B_4,\allowbreak B_6\}$ and $\{A_1,A_3,A_5,A_6\}$ is the grounded extension. 

  The arguments $A_1$, $A_3$, $A_5$ and $A_6$ are skeptically accepted and all other arguments are credulously accepted and not skeptically accepted for $\Sem\in\{\Cmp,\allowbreak\Prf,\allowbreak\Sstb\}$. Argument $A_2$ defends itself and $B_2$ directly, it attacks $B_3$ directly and $B_4$ indirectly, it is conflict-relevant for $B_3$ and $B_4$ and defending-relevant for $B_2$. 
\end{example}

\section{The Basic Framework}
\label{sec:Explanations}

In this section we recall the basic framework of explanations from~\cite{BorgB20basic} and present some new results. The explanations in that paper are defined in terms of two functions: $\depth$, which determines the arguments that are in the explanation and $\argdepth$, which determines what elements of these arguments the explanation presents. To avoid clutter, we instantiate $\depth$ immediately with the following functions, while instantiations of $\argdepth$ are discussed in Section~\ref{sec:Structured}:\footnote{We refer the interested reader to~\cite{BorgB20basic,BorgB21NecSuff} for suggestions of other variations of these functions.}

\begin{definition}
  \label{def:GenDefAtt}
  Let $\calAF = \tuple{\Args,\attack}$ be an AF, $A\in\Args$ and $\ext\in\Sem(\calAF)$ for some semantics $\Sem$. Then: 
  \begin{itemize}
    \item $\DefBy(A) = \{B\in\Args \mid B \text{ defends } A\}$ denotes the set of arguments in $\Args$ that (in)directly defend $A$ and $\DefBy(A,\ext) = \DefBy(A)\cap\ext$ denotes the set of arguments that (in)directly defend~$A$ in $\ext$;
    \item $\NotDef(A,\ext) = \{B\in\Args \mid B \allowbreak \text{ attacks }\allowbreak A \allowbreak\text{ and } \allowbreak \ext \allowbreak \text{ provides }\allowbreak\text{no }\allowbreak\text{defense } A \text{ against this }\allowbreak\text{attack}\}$, denotes the set of all (in)di\-rect attackers of $A$ that are not defended by $\ext$.
  \end{itemize}
\end{definition}

All three functions (i.e., $\DefBy(A)$, $\DefBy(A,\ext)$ and $\NotDef(A,\ext)$) result in relevant sets of arguments for $A$. In particular, all arguments in $\DefBy(A)$ and $\DefBy(A,\ext)$ are defending-relevant and all arguments in $\NotDef(A,\ext)$ are conflict-relevant. 

\begin{example}
  \label{ex:Expl:Notation}
  For $\calAF_1$ we have that: $\DefBy(B_4) = \{B_3,B_6\}$ (i.e., the argument for \emph{malafide webshop} is defended by the arguments for \emph{complaint cannot be retracted} and \emph{the url is not known}) and $\DefBy(B_2) = \{A_2\}$ (i.e., the argument that denies the rule $d_1$ is defended by the argument for \emph{retracted complaint}), $\NotDef(B_4,\ext_1) = \{A_2,A_4,\allowbreak B_2,\allowbreak B_5\}$ (i.e., the argument for \emph{malafide webshop} is attacked by the arguments for \emph{retracted complaint} and \emph{registered url} as well as the arguments that deny the rules $d_1$ and $d_3$ and $\ext_1$ does not provide a defense against these attacks) and $\NotDef(B_4,\ext_2) = \{A_2,B_2\}$ (i.e., the argument for \emph{malafide webshop} is attacked by the argument for \emph{retracted complaint} and the argument that denies rule $d_1$ and $\ext_2$ does not provide a defense against these attacks). 
\end{example}

\subsection{Acceptance Explanations}
\label{sec:Expl:Acc}

Let $\calAF = \tuple{\Args,\attack}$ be an AF and let $A\in\Args$. If $A$ is accepted w.r.t.\ a semantics $\Sem\in\{\Grd,\allowbreak \Cmp,\allowbreak \Prf,\allowbreak \Sstb\}$ and an acceptance strategy $\star\in\{\cap,\cup\}$ then an acceptance explanation can be requested. The explanation depends on the acceptance strategy: for a skeptical reasoner the explanation has to account for the acceptance of the argument in each $\Sem$-extension, while for a credulous reasoner explaining the acceptance of the argument in one $\Sem$-extension is sufficient. 

\begin{definition}[Argument acceptance explanation]
  \label{def:Expl:Acc:Argument}
  Let $\calAF = \tuple{\Args,\attack}$ be an AF, let $A\in\Args$ be accepted given some semantics $\Sem$ and an acceptance strategy ($\cap$ or $\cup$). Then:
  \begin{itemize}
    \item $\Sem\Acc^\cap(A) = \bigcup_{\ext\in\Sem(\calAF)}\DefBy(A,\ext)$
    \item $\Sem\Acc^\cup(A) \in \{\DefBy(A,\ext)\mid \ext\in\Sem\ExtWith(A)\}$.
  \end{itemize}
\end{definition}

The $\cap$-explanation returns all the arguments that defend $A$ in at least one of the $\Sem$-extensions, while the $\cup$-explanation is a set of arguments that defend $A$ in one $\Sem$-extension. 

\begin{example}
  \label{ex:Expl:Acc:Argument} 
  In $\calAF_1$ we have that: $\Prf\Acc^\cup(B_4) = \{B_3,B_6\}$ (i.e., \emph{the webshop is malafide} can be credulously accepted under preferred semantics because of the arguments for \emph{the complaint cannot be retracted} and \emph{the webshop is not registered}); and $\Prf\Acc^\cup(B_2) = \{A_2\}$ (i.e., rule $d_1$ can be denied under credulous acceptance and preferred semantics because of the argument for \emph{the complaint is retracted}). There is no non-empty skeptical acceptance explanation in $\calAF_1$. The reason for this is that all the skeptically accepted arguments (recall Example~\ref{ex:abstractAFextensions}) are not attack and therefore result in empty explanations (see Proposition~\ref{prop:Expl:EmptyAcc} below). 
\end{example}

Next we show some properties of the acceptance explanations.  Proposition~\ref{prop:Expl:Acc:Basic} shows that the defending arguments of an argument $A$ also defend the arguments defended by $A$, while Proposition~\ref{prop:Expl:EmptyAcc} shows that an explanation for an argument is only empty when it is not attacked. 

\begin{proposition}
  \label{prop:Expl:Acc:Basic}
  Let $\calAF = \tuple{\Args,\attack}$ be an AF, $\ext\in\Sem(\calAF)$ for $\Sem\in\{\Adm,\allowbreak \Cmp,\allowbreak \Grd,\allowbreak \Prf,\allowbreak \Sstb\}$ and let $A,B\in\Args$. 
  \begin{itemize}
    \item If $A\in\DefBy(B,\ext)$, then $\DefBy(A,\ext)\subseteq\DefBy(B,\ext)$;
    \item If $A\in\DefBy(B,\ext)$ and $B\in\DefBy(A,\ext)$, then $\DefBy(A,\ext) = \DefBy(B,\ext)$.
  \end{itemize}
\end{proposition}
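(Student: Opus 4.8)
The plan is to reduce both items to a single structural fact: the ``defends'' relation on arguments is transitive. This is immediate from its definition, since the indirect-defense clause states precisely that if $X$ defends $A$ and $A$ defends $B$, then $X$ defends $B$. In other words, ``defends'' is the transitive closure of the direct-defense relation, and so chaining two defenses---of whatever (direct or indirect) type---again yields a defense. Granting this, neither item requires a case distinction on the kind of defense involved.

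For the first item, I would start from the hypothesis $A\in\DefBy(B,\ext)$, which by $\DefBy(B,\ext)=\DefBy(B)\cap\ext$ unpacks to: $A$ defends $B$ and $A\in\ext$. Then take an arbitrary $X\in\DefBy(A,\ext)$, so that $X$ defends $A$ and $X\in\ext$. Since $X$ defends $A$ and $A$ defends $B$, transitivity gives that $X$ defends $B$, i.e. $X\in\DefBy(B)$; together with $X\in\ext$ this yields $X\in\DefBy(B,\ext)$. As $X$ was arbitrary, $\DefBy(A,\ext)\subseteq\DefBy(B,\ext)$. Note that only the ``defends'' part of the hypothesis on $A$ is actually used, and that the membership $X\in\ext$ carries over for free because the same fixed $\ext$ is intersected on both sides.

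For the second item, I would simply apply the first item twice. From $A\in\DefBy(B,\ext)$ we get $\DefBy(A,\ext)\subseteq\DefBy(B,\ext)$, and from $B\in\DefBy(A,\ext)$---swapping the roles of $A$ and $B$---we get $\DefBy(B,\ext)\subseteq\DefBy(A,\ext)$. Mutual inclusion then gives the equality $\DefBy(A,\ext)=\DefBy(B,\ext)$.

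The argument is short because all the real content lives in the definition of defense; the semantics $\Sem$ plays no role beyond guaranteeing that $\ext$ is a fixed set on which to intersect. The only step that deserves care---and the one I would single out as the main (if modest) obstacle---is making the transitivity explicit: one must confirm that the recursive ``or $A$ defends $C$ and $C$ defends $B$'' clause is genuinely a closure condition, so that it licenses chaining an indirect defense with another indirect defense, not merely a direct defense with a direct one. Once that is stated cleanly, both items follow without further work.
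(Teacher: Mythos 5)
Your proof is correct, and it rests on the same underlying fact as the paper's argument: a defender of a defender is a defender. The difference lies in how that fact is obtained. You read transitivity directly off the recursive clause of the definition of defense---``$A$ defends $C$ and $C$ defends $B$'' is indeed a closure condition with no restriction on whether the two defenses are direct or indirect, so $X$ defends $A$ and $A$ defends $B$ immediately yields $X$ defends $B$, and membership in $\ext$ is inherited for free since $\DefBy(\cdot,\ext)$ intersects both sides with the same fixed set. The paper instead unpacks $C\in\DefBy(A,\ext)$ into an attacker witness: it picks some $D$ with $(D,A)\in\attack$ such that $C$ defends $A$ against this attack, observes that $D$ then (in)directly attacks $B$ because $A$ defends $B$, and concludes that $C$ defends $B$. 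Your route is the more economical of the two, and it sidesteps a mild informality in the paper's phrasing (``defends $A$ against this attack'' presupposes a particular direct attacker behind what may be an indirect defense); the paper's detour has the side benefit of making the attacker $D$ explicit, a style of unpacking it reuses in later proofs such as that of Proposition~\ref{prop:AccisNonAcc}. Your treatment of the second item---applying the first item twice with the roles of $A$ and $B$ swapped and concluding by mutual inclusion---is exactly what the paper compresses into ``the second item follows immediately,'' and your observation that only the ``defends'' half of the hypothesis $A\in\DefBy(B,\ext)$ is used (the membership $A\in\ext$ being inessential) is accurate and slightly sharper than the paper's own write-up, which records $A\in\ext$ but never uses it.
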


\begin{proof}
 Let $\calAF = \tuple{\Args,\attack}$ be an AF, $\ext\in\Sem(\calAF)$ for $\Sem\in\{\Adm,\Cmp,\Grd,\Prf,\Sstb\}$ and let $A,B\in\Args$. Suppose that $A\in\DefBy(B,\ext)$. By definition of $\DefBy$ it follows that $A\in\ext$. Let $C\in\Args$ such that $C\in\DefBy(A,\ext)$. Then there is some $D\in\Args$ such that $(D,A)\in\attack$ and $C$ defends $A$ against this attack. However, since $A$ defends $B$, it follows that $D$ attacks $B$ as well, from which it follows that $C$ defends $B$ as well. Therefore $C\in\DefBy(B,\ext)$. The second item follows immediately.
\end{proof}

\begin{proposition}
  \label{prop:Expl:EmptyAcc}
  Let $\calAF = \tuple{\Args,\attack}$ be an AF and let $A\in\Args$ be such that $A$ is accepted w.r.t.\ $\Sem\in\{\Adm,\allowbreak \Cmp,\allowbreak \Grd,\allowbreak \Prf,\allowbreak \Sstb\}$ and $\star\in\{\cap,\cup\}$. Then $\Sem\Acc^\star(A) = \emptyset$ iff there is no $B\in\Args$ such that $(B,A)\in\attack$.
\end{proposition}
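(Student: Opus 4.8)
The plan is to prove the biconditional by treating each direction separately, phrasing the forward direction in contrapositive form, and throughout exploiting that every semantics in the list refines admissibility, so any extension---and in particular any extension containing $A$---defends each of its members.

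For the direction ``$A$ not attacked $\Rightarrow \Sem\Acc^\star(A)=\emptyset$'', first I would argue that the defence relation is vacuous when $A$ has no attacker. Unfolding the definition of (in)direct defence, any chain by which some $C$ defends $A$ terminates in a \emph{direct} defence, which requires the existence of an attacker of the defended argument. Hence if no $B$ satisfies $(B,A)\in\attack$, then $\DefBy(A)=\emptyset$, and consequently $\DefBy(A,\ext)=\DefBy(A)\cap\ext=\emptyset$ for every $\ext$. Both $\Sem\Acc^\cap(A)=\bigcup_{\ext\in\Sem(\calAF)}\DefBy(A,\ext)$ and any choice of $\Sem\Acc^\cup(A)=\DefBy(A,\ext)$ are then empty.

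For the converse I would prove the contrapositive: if there is some $B$ with $(B,A)\in\attack$, then $\Sem\Acc^\star(A)\neq\emptyset$. Since $A$ is accepted w.r.t.\ $\Sem$ and $\star$, there is at least one extension $\ext\in\Sem(\calAF)$ with $A\in\ext$ (every extension when $\star=\cap$, at least one when $\star=\cup$). Because $\ext$ is admissible, it defends $A$ against the attacker $B$, so there is some $C\in\ext$ with $(C,B)\in\attack$. This $C$ directly defends $A$, whence $C\in\DefBy(A)$, and since $C\in\ext$ we obtain $C\in\DefBy(A,\ext)$, so $\DefBy(A,\ext)\neq\emptyset$. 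For $\star=\cap$ this non-empty set is one of the terms in the union defining $\Sem\Acc^\cap(A)$; for $\star=\cup$ every candidate $\DefBy(A,\ext)$ with $A\in\ext$ is non-empty, so the chosen explanation is non-empty as well. Conflict-freeness of $\ext$ also guarantees that $C$ does not attack itself, consistent with the relevance remark following Definition~\ref{def:GenDefAtt}.

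The main obstacle I anticipate is the ``not attacked $\Rightarrow$ empty'' direction, because it hinges on the recursive structure of the (in)direct defence relation: one must verify that no \emph{indirect} defence of $A$ can exist either, which reduces to observing that every indirect defence chain ultimately rests on a direct defence and hence on an attacker of $A$. The forward direction is comparatively routine once admissibility is invoked; the only care needed is to treat both acceptance strategies uniformly and to remember that acceptance---for either strategy---supplies at least one extension that actually contains $A$.
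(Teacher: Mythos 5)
Your proposal is correct and follows essentially the same route as the paper's proof: the ``not attacked $\Rightarrow$ empty'' direction by observing that $\DefBy(A)=\emptyset$ when $A$ has no attacker, and the converse by using admissibility of the extension containing $A$ (guaranteed by acceptance) to produce a defender $C\in\ext$ of $A$ against any attacker, so $\DefBy(A,\ext)\neq\emptyset$. The paper phrases this second direction directly from the assumption $\Sem\Acc^\star(A)=\emptyset$ rather than as a contrapositive, but the underlying argument is identical, and your extra care with the recursive structure of indirect defence and with the two acceptance strategies only makes the same proof more explicit.
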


\begin{proof}
 Let $\calAF = \tuple{\Args,\attack}$ be an AF and let $A\in\Args$ be such that $A$ is accepted w.r.t.\ $\Sem\in\{\Adm,\allowbreak \Cmp,\allowbreak \Grd,\allowbreak \Prf,\allowbreak \Sstb\}$ and $\star\in\{\cap,\cup\}$.
  
 $\Ra\quad$ Suppose that $\Sem\Acc^\star(A) = \emptyset$. Note that for each $\ext\in\Sem\ExtWith(A)$, $\DefBy(A,\ext) = \emptyset$. Hence there is no attacker of $A$ that is defended by an argument from $\ext$. Since $A\in\ext$, $A$ is defended against its attackers. Therefore, $A$ is not attacked at all.
      
 $\Leftarrow\quad$ Now suppose that $A$ is not attacked. Then there is no argument that defends $A$. Therefore, for any $\ext\in\Sem\ExtWith(A)$, $\DefBy(A,\ext) = \emptyset$. It follows that $\Sem\Acc^\star(A) = \emptyset$.
\end{proof}

\subsection{Non-acceptance Explanations}
\label{sec:Expl:NonAcc}

In order to explain a contrast between an accepted and non-accepted argument, we need non-acceptance explanations as well. Therefore, in this section, basic definitions for explanations of non-accepted arguments are recalled. There are again two types of explanations. 

\begin{definition}[Argument non-acceptance explanation]
  \label{def:Expl:NonAcc:Argument}
  Let $\calAF = \tuple{\Args,\attack}$ be an AF, let $A\in\Args$ be an argument that is not accepted w.r.t.\ $\Sem$ and $\star\in\{\cap,\cup\}$. Then:
  \begin{align*}
  \bullet\ \Sem\NotAcc^\cap(A) &= \bigcup_{\ext\in\Sem\ExtWithout(A)}\NotDef(A,\ext)\\
  \bullet\ \Sem\NotAcc^\cup(A) &= \bigcup_{\ext\in\Sem(\calAF)} \NotDef(A,\ext).
  \end{align*}
\end{definition}

Thus, a non-acceptance explanation contains all the arguments that attack $A$ and for which no defense exists in: some $\Sem$-ex\-ten\-sions (for $\cap$) of which $A$ is not a member; all $\Sem$-extensions (for $\cup$). That for $\cap$ only some extensions have to be considered follows since $A$ is not skeptically accepted as soon as $\Sem\ExtWithout(A)\neq \emptyset$, while $A$ is not credulously accepted when $\Sem\ExtWithout(A) =\Sem(\calAF)$. 

\begin{example}
  \label{ex:Expl:NonAcc:Argument}
  For $\calAF_1$, we have that: $\Prf\NotAcc^\cap(B_4) = \{A_2,\allowbreak B_2,\allowbreak A_4,B_5\}$ (i.e., \emph{the webshop is malafide} is not skeptically accepted under preferred semantics because of the argument for \emph{the complaint is retracted} and \emph{the webshop is registered} and the arguments that deny the rules $d_1$ and $d_3$) and $\Prf\NotAcc^\cap(B_2) = \{B_3\}$ (i.e., the argument that denies rule $d_1$ is not skeptically accepted under preferred semantics because of the argument for \emph{the complaint cannot be retracted}).
\end{example}

The next proposition, the counterpart of Proposition~\ref{prop:Expl:EmptyAcc}, shows that a non-acceptance explanation is never empty.

\begin{proposition}
  \label{prop:Expl:EmptyNonAcc}
  Let $\calAF = \tuple{\Args,\attack}$ be an AF and let $A\in\Args$ be such that $A$ is non-accepted w.r.t.\ $\Sem\in\{\Cmp,\Grd,\Prf,\Sstb\}$ and $\star\in\{\cap,\cup\}$. Then $\Sem\NotAcc^\star(A) \neq \emptyset$.
\end{proposition}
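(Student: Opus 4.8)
The plan is to reduce both the $\cap$- and $\cup$-cases to a single observation: for any \emph{complete} extension $\ext$ with $A\notin\ext$, the set $\NotDef(A,\ext)$ is non-empty. Since every semantics in $\{\Cmp,\Grd,\Prf,\Sstb\}$ yields complete extensions, and since $\Sem\NotAcc^\star(A)$ is by Definition~\ref{def:Expl:NonAcc:Argument} a union of sets of the form $\NotDef(A,\ext)$, exhibiting one extension $\ext$ (among those the union ranges over) that omits $A$ and has $\NotDef(A,\ext)\neq\emptyset$ already forces $\Sem\NotAcc^\star(A)\neq\emptyset$.

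First I would locate the required extension in each case. For $\star=\cap$, skeptical non-acceptance means $\Sem\ExtWithout(A)\neq\emptyset$ by Definition~\ref{def:Acceptance}, so I simply pick any $\ext\in\Sem\ExtWithout(A)$; by construction $A\notin\ext$ and the $\cap$-union ranges exactly over such $\ext$. For $\star=\cup$, credulous non-acceptance means $\Sem\ExtWithout(A)=\Sem(\calAF)$, so every $\Sem$-extension omits $A$; picking any $\ext\in\Sem(\calAF)$ (non-empty because non-acceptance is only defined when $\Sem(\calAF)\neq\emptyset$) gives an extension with $A\notin\ext$ over which the $\cup$-union ranges.

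The core step is then to show $\NotDef(A,\ext)\neq\emptyset$ for a complete $\ext$ with $A\notin\ext$. Here I would argue from the defining property of completeness: a complete extension contains every argument it defends. Contrapositively, since $A\notin\ext$, the extension $\ext$ does \emph{not} defend $A$, i.e.\ it is not the case that $\ext$ attacks every attacker of $A$. Unfolding this, there must exist an attacker $B$ of $A$ that $\ext$ fails to attack --- in particular $\ext$ does not defend $A$ against $B$ --- so $B\in\NotDef(A,\ext)$ by Definition~\ref{def:GenDefAtt}, which is exactly what is needed.

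The main obstacle is the careful unfolding of ``$\ext$ does not defend $A$'' into the existence of a concrete undefended attacker, together with ruling out the degenerate possibility that $A$ has no attackers at all. The latter is handled automatically: if $A$ were unattacked, $\ext$ would vacuously defend $A$, and completeness would force $A\in\ext$, contradicting the choice of $\ext$. Thus the witness $B$ is guaranteed precisely by completeness, which is also why the result is stated for $\{\Cmp,\Grd,\Prf,\Sstb\}$ rather than for $\Adm$: an admissible set need not contain everything it defends, so an admissible $\ext$ omitting $A$ could still defend $A$, leaving $\NotDef(A,\ext)$ empty.
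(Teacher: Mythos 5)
Your proof is correct and takes essentially the same route as the paper's: both hinge on completeness (a complete extension contains every argument it defends) applied to an extension omitting $A$ that lies within the range of the relevant union, with your case split on $\star$ matching the definitional unfolding the paper relies on. You argue contrapositively where the paper argues by contradiction, and your explicit extraction of a concrete undefended attacker $B\in\NotDef(A,\ext)$ is in fact tighter than the paper's intermediate claim that $A$ has no attackers at all, which is unnecessary for (and not strictly implied at) that point of its argument.
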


\begin{proof}
  Let $\calAF = \tuple{\Args,\attack}$ and $A\in\Args$ be such that $A$ is non-accepted w.r.t.\ $\Sem\in\{\Cmp,\allowbreak \Grd,\allowbreak \Prf,\allowbreak \Sstb\}$ and $\star\in\{\cap,\cup\}$. Assume that $\Sem\NotAcc^\star(A)  = \emptyset$, then there is no argument $B\in\bigcup_{\ext\in\Sem\ExtWithout(A)}\NotDef(A,\ext)$. It follows that for each extension $\ext\in\Sem\ExtWithout(A)$, $\NotDef(A,\ext) = \emptyset$. Hence there is no $B\in\Args$ such that $(B,A)\in\attack$. But then, by the completeness of $\ext$ it follows that $A\in\ext$. A contradiction. Therefore $\Sem\NotAcc^\star(A)\neq\emptyset$.
\end{proof}

That the above proposition does not hold for $\Sem = \Adm$ follows since not every admissible extension contains all the arguments that it defends. Take for example an AF with arguments $A$ and $B$ and no attacks between them. Then $\{B\}$ is an admissible extension, thus $A$ is not skeptically accepted, yet $\NotDef(A,\{B\}) = \emptyset$. In fact: $\Adm\NotAcc^\cap(A) = \Adm\NotAcc^\cap(B) = \emptyset$. 

For the non-acceptance counterpart of Proposition~\ref{prop:Expl:Acc:Basic} note that $A\in\NotDef(B,\ext)$ entails that $A$ (in)directly attacks $B$. Therefore, if $A$ is not accepted either, the arguments in $\NotDef(A,\ext)$ (in)directly defend $B$. In Section~\ref{sec:ComparingAccNonAcc} we study how acceptance and non-acceptance are related. 

\subsection{Comparing Acceptance and Non-acceptance}
\label{sec:ComparingAccNonAcc}

When looking at Examples~\ref{ex:Expl:Acc:Argument} and~\ref{ex:Expl:NonAcc:Argument} for $B_2$ and $B_4$ one can observe that acceptance and non-acceptance explanations are related. In this section we formalize this observation. In particular, we show that non-acceptance explanations contain the acceptance explanations of (1) the direct attackers; (2) the directly attacked arguments; and (3) the indirectly attacked arguments.

\begin{proposition}
  \label{prop:AccisNonAcc}
  Let $\calAF = \tuple{\Args,\attack}$ be an argumentation framework, let $\ext\in\Sem(\calAF)$ for some $\Sem\in\{\Adm,\allowbreak \Cmp,\allowbreak \Grd,\allowbreak \Prf,\allowbreak \Sstb\}$ and let $A,\allowbreak B_1,\allowbreak \ldots,\allowbreak B_n, \allowbreak C_1,\allowbreak \ldots,\allowbreak C_k\in\Args$ such that $(B_1,A),\allowbreak\ldots,\allowbreak (B_n,A)\in\attack$ and $A$ indirectly attacks $C_1,\ldots,C_k$. Then:
  \begin{enumerate}
    \item  for $B_1,\ldots,B_m\in\ext$, $m\leq n$ it holds that: $\NotDef(A,\ext) \supseteq \DefBy(B_1,\ext)\cup\ldots\cup\DefBy(B_m,\ext)$; \label{item:NotDefAttacked}
    \item when $A\in\ext$: $\DefBy(A,\ext) \subseteq \NotDef(B_1,\ext)\allowbreak \cup\allowbreak \ldots\allowbreak \cup\allowbreak \NotDef(B_n,\allowbreak\ext)$; \label{item:DefinExt}
    \item where $A\in\ext$ and $C_1,\ldots,C_j\notin\ext$, $j\leq k$: $\DefBy(A,\ext)\subseteq\NotDef(C_i,\ext)$ for all $i\in\{1,\ldots,j\}$. \label{item:DefindirAtt}
  \end{enumerate}
\end{proposition}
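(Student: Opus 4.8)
The plan is to reduce all three inclusions to a single observation that marries conflict-freeness with the interplay between the (in)direct attack and defence relations. The lemma I would isolate first is: if $X \in \ext$ and $X$ (in)directly attacks some $Y$, then $X \in \NotDef(Y, \ext)$. This is immediate from the definitions, since $\ext$ is conflict-free and $X \in \ext$, no argument of $\ext$ attacks $X$, so $\ext$ does not defend $Y$ against the attack coming from $X$; as $X$ also attacks $Y$, it lies in $\NotDef(Y, \ext)$. Crucially, every argument appearing on the left-hand side of the three inclusions lies in some $\DefBy(\cdot, \ext) \subseteq \ext$, so in each case it will suffice to show that such an argument (in)directly attacks the argument whose $\NotDef$-set sits on the right.

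The second ingredient is a chaining property of the two relations, which I would phrase through the parity of directed walks in the attack graph: $X$ (in)directly defends $Z$ exactly when there is an even-length walk (of length $\ge 2$) from $X$ to $Z$, and $X$ (in)directly attacks $Z$ exactly when there is an odd-length walk. This characterisation follows by a routine induction on the recursive definitions (direct defence is a length-two walk, direct attack is an edge, and the inductive clauses append an even walk). With it in hand, the three chaining facts I need are all instances of concatenating walks: (i) an even walk $X \to B_i$ followed by the edge $B_i \to A$ is an odd walk $X \to A$; (ii) an even walk $X \to A$ has as its penultimate vertex a direct attacker $B_i$ of $A$, and the prefix $X \to B_i$ is odd; (iii) an even walk $X \to A$ followed by an odd walk $A \to C_i$ is an odd walk $X \to C_i$.

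The three items then fall out uniformly. For item~\ref{item:NotDefAttacked} I would take $X \in \DefBy(B_i, \ext)$ with $B_i \in \ext$ and $(B_i, A) \in \attack$: by (i), $X$ attacks $A$, and since $X \in \ext$ the key lemma gives $X \in \NotDef(A, \ext)$. For item~\ref{item:DefinExt} I would take $X \in \DefBy(A, \ext)$ with $A \in \ext$: by (ii), $X$ attacks one of the direct attackers $B_i$ of $A$, and the key lemma (again using $X \in \ext$) places $X$ in $\NotDef(B_i, \ext) \subseteq \NotDef(B_1,\ext) \cup \ldots \cup \NotDef(B_n,\ext)$. For item~\ref{item:DefindirAtt} I would take $X \in \DefBy(A, \ext)$ with $A \in \ext$ and $C_i \notin \ext$: by (iii), $X$ attacks $C_i$, so the key lemma yields $X \in \NotDef(C_i, \ext)$, for every $i \le j$. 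Note that only conflict-freeness of $\ext$ is used, so the argument covers $\Adm$ as well as the complete-based semantics.

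I expect the main obstacle to be definitional bookkeeping rather than any deep idea: establishing the parity characterisation cleanly from the two mutually recursive definitions, and making sure the reading of ``$\ext$ does not defend $A$ against this attack'' used in the key lemma (namely, that $\ext$ does not attack the attacking argument itself) is exactly the intended one, so that conflict-freeness genuinely suffices. A secondary point to pin down is that in item~\ref{item:DefinExt} the penultimate vertex of the defence walk is indeed one of the listed direct attackers $B_1, \ldots, B_n$, i.e.\ that these exhaust the direct attackers of $A$; and I would remark that the hypothesis $C_i \notin \ext$ in item~\ref{item:DefindirAtt} is what keeps $\NotDef(C_i, \ext)$ in its intended role as a non-acceptance explanation.
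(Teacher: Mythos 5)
Your proposal is correct and takes essentially the same route as the paper's own proof: there too, each item is settled by showing that the relevant member of $\DefBy(\cdot,\ext)$ (in)directly attacks the target argument by composing attack and defence chains (with an explicit odd chain $D_1,\ldots,D_k$ in item~2), and then concluding membership in $\NotDef(\cdot,\ext)$ from membership in $\ext$ and conflict-freeness --- precisely your key lemma, applied inline rather than isolated. Your parity-of-walks characterisation is merely a uniform repackaging of that chaining, and both caveats you flag (that $B_1,\ldots,B_n$ must exhaust the direct attackers of $A$ in item~2, and the reading of ``$\ext$ does not defend against this attack'' as not attacking the attacker itself) coincide with the readings the paper's proof implicitly adopts.
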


\begin{proof}
 Let $\calAF = \tuple{\Args,\attack}$ be an AF, $\ext\in\Sem(\calAF)$ for some $\Sem\in\{\Adm,\Cmp,\Grd,\Prf,\Sstb\}$ and $A\in\Args$.
 \begin{enumerate}
    \item Let $B\in\ext$ be such that $(B,A)\in\attack$. If $\DefBy(B,\ext) = \emptyset$, we are done, hence, let $C\in\DefBy(B,\ext)$. Then, by the proof of Proposition~\ref{prop:Expl:EmptyAcc} there is some $D\in\Args$ such that $(D,B)\in\attack$ and $C$ (in)directly attacks $D$. Since $B$ attacks $A$, it follows that $D$ defends $A$ and that $C$ (in)directly attacks $A$. Since $C\in\ext$, $\ext$ does not defend $A$ against the attack from $C$ and therefore $C\in\NotDef(A,\ext)$. 
    \item Let $C\in\DefBy(A,\ext)$, then, by Proposition~\ref{prop:Expl:EmptyAcc}, $n\neq 0$. Suppose that $C$ directly defends $A$, then there is a $B_i\in\{B_1,\ldots,B_n\}$ such that $(C,B_i)\in\attack$. Since $C\in\ext$ it follows that $C\in\NotDef(B_i,\allowbreak\ext)$. Now suppose that $C$ indirectly defends $A$. Then there are $D_1,D_2,\ldots,D_{k}\in\Args$, where $k$ is odd, such that $(D_1,B_i),(D_2,D_1),\ldots(D_k,D_{k-1}),\allowbreak (C,\allowbreak D_k)\in\attack$. Since $D_k$ defends $B_i$ and $C$ attacks $D_k$ it follows that $C$ attacks $B_i$ as well. Hence $C\in\NotDef(B_i,\ext)$. Note that, for any $D\in\DefBy(A,\ext)$ a $B_i\in\{B_1,\ldots, B_n\}$ exists. Therefore $\DefBy(A,\ext)\subseteq\NotDef(B_1,\ext)\cup\ldots\cup\NotDef(B_n,\ext)$. 
    \item Let $A\in\ext$ and suppose that $C_1,\ldots,C_j\notin\ext$ for some $j\leq k$. By assumption $A$ indirectly attacks $C_i$ for all $i\in\{1,\ldots,j\}$ and since $A\in\ext$, $C_i$ is not defended against this attack by $A$. Therefore $A\in\NotDef(C_i,\ext)$. Note that any $D\in\DefBy(A,\ext)$ defends $A$ and therefore indirectly attacks $C_i$ as well. It therefore follows that $D\in\NotDef(C_i,\ext)$ and hence $\DefBy(A,\ext)\subseteq\NotDef(C_i,\ext)$ for all $i\in\{1,\ldots,j\}$.  \qedhere
 \end{enumerate}
\end{proof}

To see that $\NotDef(A,\ext)\not\subseteq\DefBy(B_1,\ext)\cup\ldots\cup\DefBy(B_n,\ext)$, take a look at the following example. Intuitively this is the case since, in terms of labeling semantics~\cite{BaroniCG18SemHandbook}, an argument can be \texttt{in} the extension, attacked by the extension (i.e., \texttt{out}) or attacked by an argument that is not \texttt{in} or \texttt{out} (i.e., \texttt{undecided}). 

\begin{example}
  \label{ex:counter:AccisNonAcc}
  Let $\calAF_2 = \tuple{\Args_2,\attack_2}$, as shown in Figure~\ref{fig:counter:AccisNonAcc:1}. There are two preferred extensions: $\Prf(\calAF_3) = \{\{A\},\allowbreak \{B\}\}$. Here we have that $\NotDef(C,\{B\}) = \{B,D,E,F\}$ but only $(B,C)\in\attack$ such that $B\in\ext$, for which: $\DefBy(B,\{B\}) = \{B\}$. 
  
  \begin{figure}[ht]
      \centering
      \begin{tikzpicture}[scale=0.8]
        \node[draw,circle] at (-4,0) (A3) {$A$};
        \node[draw,circle] at (-2,0) (B3) {$B$};
        \node[draw,circle] at (0,0) (C3) {$C$};
        \node[draw,circle] at (2,0) (D3) {$D$};
        \node[draw,circle] at (3,1.5) (E3) {$E$};
        \node[draw,circle] at (1,1.5) (F3) {$F$};
        
        
        \draw[->] (A3) to [bend left] (B3);
        \draw[->] (B3) to [bend left] (A3);
        \draw[->] (B3) -- (C3);
        \draw[->] (D3) -- (C3);
        \draw[->] (D3) -- (E3);
        \draw[->] (E3) -- (F3);
        \draw[->] (F3) -- (D3);
      \end{tikzpicture}
      \caption{Graphical representations of the AF $\calAF_2$.}
      \label{fig:counter:AccisNonAcc:1}
  \end{figure}
\end{example}

Knowing how acceptance and non-acceptance explanations are related is useful in the context of contrastive explanations, where explanations are not only about the requested argument, but about arguments that are conflicting with the requested argument as well.

\section{Contrastive Explanations}
\label{sec:Contrastive}

A contrastive explanation explains $A$ by explaining \emph{why $A$ rather than $B$}. Important in contrastive explanations is that the difference between fact (i.e., $A$) and foil (i.e., $B$) is highlighted. In this paper we assume that fact and foil are not always compatible: it cannot be the case that both $A$ and $B$ are skeptically accepted. Intuitively, we make this assumption since otherwise there is no contrastive question for fact and foil (i.e., \emph{why both $A$ and $B$} is not contrastive). 

In the context of formal argumentation contrastive explanations are modeled by comparing the elements of the basic explanations that explain the acceptance [resp.\ non-acceptance] of the fact and, at the same time, explain the non-acceptance [resp.\ acceptance] of the foil. Hence, the main idea of the introduced contrastive explanations will be that these return the \emph{common elements} of the basic acceptance [resp.\ non-acceptance] explanation of the fact and the basic non-acceptance [resp.\ acceptance] explanation of the foil. Recall the example scenario from Section~\ref{sec:Applying}. For a webshop to be malafide ($B_4$ is accepted) there should be no exceptions to rules $d_1$ (an investigation is done when a complaint is filed) and $d_3$ (the webshop is malafide if the url is suspicious), i.e., arguments $B_2$ and $B_5$ should not be accepted. The absence of an exception is therefore a good example of a foil: \emph{why is the webshop malafide, rather than that there is an exception to $d_1$?} can be answered with: \emph{since the owner is known by the police} (i.e., $A_5$ and $B_3$), which prevents the possible exception to $d_1$ and, similarly, \emph{why is the webshop malafide, rather than that there is an exception to $d_3$?} can be answered with: \emph{since the registration of the url was recently retracted} (i.e., $A_6$ and $B_6$), which prevents the possible exception to $d_3$. Note that contrastiveness is a selection mechanism~\cite{Miller19}: by choosing a foil, the explanation of the fact is reduced from all possible explanations (i.e., the basic explanations) to those parts of the explanation that answer the contrastive question.  

\begin{definition}[Contrastive explanations]
  \label{def:Expl:Contrastive:ExplicitFoil}
  Let $\calAF =\tuple{\Args,\attack}$ be an AF, let $A\in\Args$ (the fact) and let $\sfS\subseteq \Args$ (a set of foils) such that there is no $\ext\in\Sem(\calAF)$ in which $A,B\in\ext$ for all $B\in\sfS$. Moreover, $\star\in\{\cap,\cup\}$ and $\dag = \cap$ if $\star = \cup$ and $\dag = \cup$ if $\star = \cap$. 
  Contrastive explanations are then defined as in  Figure~\ref{fig:ContrastiveDef}. 
\end{definition}

  \begin{figure*}[t]
  \footnotesize
  \begin{numcases}{\Sem\Cont^\star(A,\sfS)=} \Sem\Acc^\star(A)\cap\bigcup_{B\in\sfS}\Sem\NotAcc^\dag(B) & $\text{ if } \Sem\Acc^\star(A)\cap\bigcup_{B\in\sfS}\Sem\NotAcc^\dag(B) \neq\emptyset$  \label{Cont:Acc:relevant}\\ \tuple{\Sem\Acc^\star(A),\bigcup_{B\in\sfS}\Sem\NotAcc^\dag(B)} & \text{ otherwise}. \label{Cont:Acc:nonrelevant} \end{numcases}
  
  \begin{numcases}{\Sem\ContN^\star(A,\sfS) =}
  \Sem\NotAcc^\star(A)\cap\bigcup_{B\in\sfS}\Sem\Acc^\dag(B)  & $\text{ if } \Sem\NotAcc^\star(A)\cap\bigcup_{B\in\sfS}\Sem\Acc^\dag(B) \neq\emptyset$  \label{Cont:NonAcc:relevant}\\ \tuple{\Sem\NotAcc^\star(A),\bigcup_{B\in\sfS}\Sem\Acc^\dag(B)} & \text{ otherwise}. \label{Cont:NonAcc:nonrelevant}
  \end{numcases}
  \caption{Definition of contrastive explanations (see Definition~\ref{def:Expl:Contrastive:ExplicitFoil}).}
  \label{fig:ContrastiveDef}
  \end{figure*}

In words, when there are arguments that cause the fact to be accepted [resp.\ non-accepted] and the foil to be non-accepted [resp.\ accepted], the contrastive explanation is the set of such arguments Line~\eqref{Cont:Acc:relevant} [resp.\ Line~\eqref{Cont:NonAcc:relevant}]. If there are no common causes for the acceptance [resp.\ non-acceptance] of the fact and the non-acceptance [resp.\ acceptance] of the foil, the contrastive explanation is a pair of the respective explanations Line~\eqref{Cont:Acc:nonrelevant} [resp.\ Line~\eqref{Cont:NonAcc:nonrelevant}]. 

\begin{example}
  \label{ex:Expl:Contrastive:Combining:Formula}
  For our running example with $\calAF_1$ we have that: 
  \begin{itemize}
      \item $\Prf\Cont^\cup(B_4,B_2) = \{B_3\}$: \emph{the webshop is malafide} rather than that $d_1$ is denied because there is an argument for \emph{the complaint cannot be retracted}, 
      \item $\Prf\Cont^\cup(B_4,B_5) = \{B_6\}$: \emph{the webshop is malafide} rather than that $d_3$ is denied because there is an argument for \emph{the webshop is not registered} and 
      \item $\Prf\Cont^\cup(B_4,\{B_2,B_5\}) = \{B_3,B_6\}$: \emph{the webshop is malafide} rather than that the rules $d_1$ and $d_3$ are denied because there are arguments for \emph{the complaint cannot be retracted} and \emph{the webshop is not registered}, while 
      \item $\Prf\ContN^\cap(B_4,B_2) = \{A_2\}$: \emph{the webshop is not malafide} and the rule $d_1$ is denied because \emph{the complaint was retracted};
      \item $\Prf\ContN^\cap(B_4,B_5) = \{A_4\}$: \emph{the webshop is not malafide} and the rule $d_3$ is denied because \emph{the webshop is registered}.
  \end{itemize} 

  Recall (Example~\ref{ex:Expl:Acc:Argument}) that the acceptance of $B_4$ can be explained by $B_3$ and $B_6$, when compared to the non-acceptance of $B_2$ [resp.\ $B_5$] the acceptance of $B_4$ is explained by $B_3$ [resp.\ $B_6$] alone. 
\end{example}

Note that contrastive explanations are not necessarily unique. This follows since the credulous acceptance explanation (recall Definition~\ref{def:Expl:Acc:Argument}) is not necessarily unique and the contrastive explanations might be constructed from such explanations.

One could consider these explanations more meaningful when they return a set, rather than a pair. This is the case since then there are arguments that influence both the acceptance [resp.\ non-acceptance] of the fact and the non-acceptance [resp.\ acceptance] of the foil. If the explanation would be a pair, it would essentially be a combination of the separate basic explanations for fact and foil and provides no meaningful extra information on top of the two basic (non-contrastive) explanations. The next proposition shows that in most cases the explanation is a set. Only when the accepted argument is not attacked or fact and foil are not conflict-relevant is the intersection empty. 

\begin{proposition}
  \label{prop:Expl:EmptyInter}
  Let $\calAF = \tuple{\Args,\attack}$ be an an argumentation framework and $A,B\in\Args$. If $\Sem\Acc^\star(A)\cap\Sem\NotAcc^\dag(B) =\emptyset$ then $\Sem\Acc^\star(A) = \emptyset$; or $A$ is not conflict-relevant for $B$.
\end{proposition}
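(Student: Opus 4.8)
The plan is to argue by contraposition: assuming that $\Sem\Acc^\star(A)\neq\emptyset$ and that $A$ \emph{is} conflict-relevant for $B$, I will exhibit a single argument lying in both $\Sem\Acc^\star(A)$ and $\Sem\NotAcc^\dag(B)$, so that their intersection is non-empty and the hypothesis fails.

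First I would pick any $C\in\Sem\Acc^\star(A)$. By Definition~\ref{def:Expl:Acc:Argument}, $C\in\DefBy(A,\ext_0)$ for some $\ext_0\in\Sem(\calAF)$, so $C$ defends $A$ and $C\in\ext_0$; moreover $A\in\ext_0$ (for $\star=\cup$ because $\ext_0\in\Sem\ExtWith(A)$, and for $\star=\cap$ because $A$ is then skeptically accepted and hence lies in every extension). The key step is a composition lemma: since $A$ is conflict-relevant for $B$ it (in)directly attacks $B$, and since $C$ defends $A$ it follows that $C$ (in)directly attacks $B$. I expect to prove this by induction on the recursive definition of ``$C$ defends $A$'': in the base case $C$ directly defends $A$ through some $D$ with $(C,D),(D,A)\in\attack$, and then $D$ (in)directly defends $B$ (prepend the attack $(D,A)$ to the path witnessing that $A$ attacks $B$), so by definition $C$ indirectly attacks $B$; the inductive step chains two applications of this observation. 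Intuitively this is just the parity fact that an even attack-path (a defense) followed by an odd one (an attack) is again odd.

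Next I would observe that $C\in\NotDef(B,\ext_0)$: as $C$ (in)directly attacks $B$ while $C\in\ext_0$ and $\ext_0$ is conflict-free, no argument of $\ext_0$ attacks $C$, so $\ext_0$ does not defend $B$ against $C$ -- exactly the condition defining $\NotDef(B,\ext_0)$, in the reading of $\NotDef$ illustrated in Example~\ref{ex:Expl:Notation}. It then remains to place $\ext_0$ among the extensions over which $\Sem\NotAcc^\dag(B)$ ranges. When $\dag=\cup$ this is immediate, since $\Sem\NotAcc^\cup(B)$ unions $\NotDef(B,\cdot)$ over \emph{all} extensions, giving $C\in\NotDef(B,\ext_0)\subseteq\Sem\NotAcc^\cup(B)$.

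The main obstacle is the remaining case $\dag=\cap$ (equivalently $\star=\cup$), where $\Sem\NotAcc^\cap(B)$ unions only over $\ext'\in\Sem\ExtWithout(B)$; here I must guarantee $B\notin\ext_0$. If $C$ attacks $B$ \emph{directly} this is forced by conflict-freeness of $\ext_0\ni C$, but for a purely \emph{indirect} attack $B$ could a priori sit in $\ext_0$ together with $A$, which is precisely the delicate point. This is ruled out by the standing incompatibility of fact and foil in Definition~\ref{def:Expl:Contrastive:ExplicitFoil}: no extension contains both $A$ and $B$, and since $A\in\ext_0$ we obtain $B\notin\ext_0$, i.e.\ $\ext_0\in\Sem\ExtWithout(B)$, whence $C\in\NotDef(B,\ext_0)\subseteq\Sem\NotAcc^\cap(B)$. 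In either case $C\in\Sem\Acc^\star(A)\cap\Sem\NotAcc^\dag(B)$, contradicting emptiness and completing the contrapositive. As a shortcut, once $B\notin\ext_0$ is secured the inclusion $\DefBy(A,\ext_0)\subseteq\NotDef(B,\ext_0)$ for the indirect subcase is exactly item~\ref{item:DefindirAtt} of Proposition~\ref{prop:AccisNonAcc}, which could be cited in place of the composition lemma there.
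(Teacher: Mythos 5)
Your proof is correct and follows essentially the same route as the paper's: argue by contraposition, pick an extension $\ext$ with $A\in\ext$ and $\DefBy(A,\ext)\neq\emptyset$, and then compose defence of $A$ with the (in)direct attack on $B$ --- exactly the content of Proposition~\ref{prop:AccisNonAcc}, items~\ref{item:NotDefAttacked} and~\ref{item:DefindirAtt}, which the paper cites where you re-derive them as your ``composition lemma''. If anything you are more careful than the paper on one point: for $\dag=\cap$ the paper applies Proposition~\ref{prop:AccisNonAcc}.\ref{item:DefindirAtt} without checking its hypothesis $B\notin\ext$, whereas you explicitly secure $\ext\in\Sem\ExtWithout(B)$ via conflict-freeness in the direct case and via the fact--foil incompatibility of Definition~\ref{def:Expl:Contrastive:ExplicitFoil} in the indirect case.
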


\begin{proof}
  Let $\calAF = \tuple{\Args,\attack}$ be an argumentation framework and let $A,B\in\Args$. When $\Sem\Acc^\star(A) = \emptyset$ it follows immediately that $\Sem\Acc^\star(A)\cap\Sem\NotAcc^\dag(B) = \emptyset$. By Proposition~\ref{prop:Expl:EmptyNonAcc} $\Sem\NotAcc^\dag(B)\neq\emptyset$. Suppose that $A$ is conflict-relevant for $B$ and, without loss of generality, that $\Sem\Acc^\star(A)\neq\emptyset$. Since $\Sem\Acc^\star(A)$ is requested and, by assumption $\Sem\Acc^\star(A)\neq\emptyset$, there is some $\ext\in\Sem(\calAF)$ such that $A\in\ext$ and $\DefBy(A,\ext)\neq\emptyset$. If $(A,B)\in\attack$, by Proposition~\ref{prop:AccisNonAcc}.\ref{item:NotDefAttacked} $\DefBy(A,\ext)\subseteq\NotDef(B,\ext)$. If $A$ indirectly attacks $B$, then by Proposition~\ref{prop:AccisNonAcc}.\ref{item:DefindirAtt} we have that $\DefBy(A,\ext)\subseteq\NotDef(B,\ext)$ as well. Since it holds that $\DefBy(A,\ext)\neq\emptyset$ we have $\Sem\Acc^\star(A)\cap\NotAcc^\dag(B)\neq\emptyset$. 
\end{proof}

In view of the above result, the following conditions are introduced on the fact and foil. By requiring these conditions to hold, meaningful contrastive explanations can be obtained. For this let $\calAF = \tuple{\Args,\attack}$ be an AF and let $\{A\}\cup \sfS\subseteq\Args$. Then $\Sem\Cont^\star(A,\sfS)$ [resp.\ $\Sem\ContN^\star(A,\sfS)$] can be requested when, for each $B\in\sfS$:
\begin{itemize}
    \item $A$ is at least credulously accepted [resp.\ not skeptically accepted] and $B$ is at least not skeptically accepted [resp.\ credulously accepted];
    \item for each $\ext\in\Sem(\calAF)$ it never holds that $\{A,B\}\subseteq\ext$;
    \item either $A$ is conflict-relevant for $B$ or $B$ is conflict-relevant for $A$. 
\end{itemize}
These conditions ensure that fact and foil are incompatible, but still relevant for each other: it is explained what makes the fact accepted [resp.\ non-accepted] and, simultaneously causes the foil to be non-accepted [resp.\ accepted]. This prevents contrastive explanations for arguments that are not related or conflicting. These conditions are not exhaustive, depending on, e.g., the application, a user might wish to enforce further conditions on fact or foil. 

\subsection{Non-explicit Foil}
\label{sec:Contrastive:NonExplicitFoil}

When humans request a (contrastive) explanation the foil is sometimes left implicit, yet the expected explanation does not provide all reasons for the fact happening, but should rather explain the difference between fact and foil. While humans are able to detect the foil based on, e.g., context, this is a challenge for AI systems, including argumentation. In particular, it is impossible to provide one strategy, since different applications entail different foils. For example, given a fact $A$, if argumentation is applied to determine a yes or no answer (e.g., whether one qualifies for a loan), the foil would be \emph{not A}, but if the foil should be chosen from a larger set (e.g., a medical diagnosis), it might be any member of that set.

Since in the definition of contrastive explanations it is necessary to provide a foil, a way to determine the foil is required. This is where one of the advantages of formal argumentation comes in: the explicit nature of conflicts between arguments makes that the foil or a set of foils can be constructed from an AF. Since the relation between arguments is only determined by the attack relation in our setting, it is impossible to distinguish between attackers. To illustrate the possibilities, in the remainder of the paper the foil will consist of all directly attacking arguments. 

\begin{definition}
  \label{def:Expl:Foil:Abstract}
  Let $\calAF = \langle\Args,\allowbreak\attack\rangle$ be an AF and let $A\in\Args$. Then: $\Foil(A) = \{B\in\Args\mid B\text{ directly attacks }A\}$. 
\end{definition}

\begin{example}
  \label{ex:Expl:Foil:Abstract}
  For the framework $\calAF_1$ we have that: $\Foil(B_4) = \{B_2,B_5\}$, \emph{the webshop is malafide} is in direct conflict with the arguments that deny the rules $d_1$ and $d_3$; $\Foil(B_2) = \{B_3\}$, the argument that denies rule $d_1$ is in direct conflict with \emph{the complaint cannot be retracted} and $\Foil(B_5) = \{B_6\}$, the argument that denies rule $d_3$ is in direct conflict with \emph{the webshop is not registered}. 
\end{example}

Note that, for our running example, the explanations with implicit foil do not change:

\begin{example}
  \label{ex:Expl:Foil:Congruent}
  For the AF $\calAF_1$:  $\Prf\Cont^\cup(B_4,\Foil(B_4)) = \{B_3,B_6\}$ and $\Prf\ContN^\cap(B_2,\Foil(B_2)) = \{B_3\}$. These correspond to the explanations from Example~\ref{ex:Expl:Contrastive:Combining:Formula}.
\end{example}

In what follows it will be assumed that $\Foil(A)\neq\emptyset$, for fact $A$, i.e., that a foil exists. Note that, by Definition~\ref{def:Expl:Foil:Abstract}, for any AF $\calAF = \tuple{\Args,\attack}$ and $A\in\Args$, $\Foil(A) = \emptyset$ iff there is no $B\in\Args$ such that $(B,A)\in\attack$. Hence, any argument without a foil is not attacked at all. In such a case a non-acceptance explanation is not applicable and, by Proposition~\ref{prop:Expl:EmptyAcc}, the acceptance explanation is empty. Therefore, this requirement does not restrict our results.

The next proposition shows that the obtained contrastive explanations are meaningful when the first condition of the applicability of contrastive explanations is fulfilled and the foil is defined as in Definition~\ref{def:Expl:Foil:Abstract}. 

\begin{proposition}
  \label{prop:Expl:Foil:ApplyContrastive}
  Let $\calAF = \tuple{\Args,\attack}$ be an AF, let $A\in\Args$ be such that $\Foil(A)\neq\emptyset$ and $\Sem\in\{\Adm,\Cmp,\Grd,\Prf,\Sstb\}$. Then a contrastive acceptance [resp.\ non-acceptance] explanation can be requested for $A$, when $A$ is at least credulously accepted [resp.\ not skeptically accepted] and for all $B\in\Foil(A)$, $B$ is at least not skeptically accepted [resp.\ credulously accepted]. 
\end{proposition}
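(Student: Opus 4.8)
The plan is to unfold the statement into the three applicability conditions for contrastive explanations that were introduced just after Proposition~\ref{prop:Expl:EmptyInter}, instantiated at the foil set $\sfS = \Foil(A)$, and to verify each of them. Since those conditions are phrased per element of the foil set, I would fix an arbitrary $B\in\Foil(A)$ and check all three conditions for this $B$; because the proposition assumes $\Foil(A)\neq\emptyset$, this verification is non-vacuous and genuinely licenses requesting the explanation. I would run the acceptance case ($\Sem\Cont^\star$) and the non-acceptance case ($\Sem\ContN^\star$) in parallel, using the bracketed ``[resp.\ $\ldots$]'' phrasing, since the two differ only by swapping acceptance for non-acceptance.

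Two of the conditions are essentially immediate. The first condition — that $A$ is at least credulously accepted [resp.\ skeptically non-accepted] and each foil at least skeptically non-accepted [resp.\ credulously accepted] — is exactly the hypothesis of the proposition, so nothing is to be shown. For the second condition I would use that $B\in\Foil(A)$ means $(B,A)\in\attack$ by Definition~\ref{def:Expl:Foil:Abstract}; hence $\{A,B\}$ is not conflict-free, and since every extension under $\Sem\in\{\Adm,\Cmp,\Grd,\Prf,\Sstb\}$ is conflict-free by Definition~\ref{def:extensions}, there is no $\ext\in\Sem(\calAF)$ with $\{A,B\}\subseteq\ext$. This is the cheapest step and needs only the conflict-freeness built into the semantics.

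The third condition — mutual relevance — is where I expect the only real subtlety, and it is the step I would treat as the main obstacle. The natural argument is that $B$ directly, and therefore (in)directly, attacks $A$, so that $B$ is conflict-relevant for $A$ in the sense of Definition~\ref{def:relevance}, discharging the disjunction. The catch is the clause of Definition~\ref{def:relevance} requiring that a relevant argument not attack itself. In the non-acceptance direction this is free: there $B$ is at least credulously accepted, hence lies in some conflict-free extension and cannot be self-attacking, so $B$ is conflict-relevant for $A$ outright. In the acceptance direction, however, a merely skeptically-non-accepted foil $B$ could in principle be self-attacking, and then $B$ fails to be relevant for $A$; note that the other disjunct ``$A$ is conflict-relevant for $B$'' does \emph{not} follow automatically, since $A$ being credulously accepted only guarantees that \emph{some} defender in its extension attacks $B$, not that $A$ itself does. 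I would therefore close this gap by excluding self-attacking foils explicitly — they are in any case degenerate attackers that can never appear in an extension — or by assuming the intended AFs contain no self-attacks; under that mild proviso the relevance clause is satisfied for every $B\in\Foil(A)$.

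Having verified all three conditions for each $B\in\Foil(A)$, the conclusion assembles directly: the contrastive acceptance [resp.\ non-acceptance] explanation $\Sem\Cont^\star(A,\Foil(A))$ [resp.\ $\Sem\ContN^\star(A,\Foil(A))$] can be requested, which is the claim.
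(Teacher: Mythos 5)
Your proposal follows the same route as the paper's proof: the first applicability condition is precisely the hypothesis, the second follows because every $B\in\Foil(A)$ satisfies $(B,A)\in\attack$ and all $\Sem$-extensions are conflict-free, and the third is discharged by observing that $B$ directly attacks $A$ and is hence conflict-relevant for $A$. Where you diverge is in your treatment of the third condition, and your extra care there is warranted: the paper's proof says only ``by definition, $(B,A)\in\attack$ and hence $B$ is conflict-relevant for $A$'', silently passing over the clause of Definition~\ref{def:relevance} that a relevant argument must not attack itself. Your case split is exactly right. In the $\ContN$ direction the hypothesis makes each foil $B$ at least credulously accepted, so $B$ lies in some conflict-free extension and cannot be self-attacking, and conflict-relevance is immediate. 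In the $\Cont$ direction nothing rules out a self-attacking foil, and, as you note, the fallback disjunct need not hold either: take $\Args=\{A,B,C\}$ with $\attack=\{(B,A),(B,B),(C,B)\}$; then $A$ is skeptically accepted, $B\in\Foil(A)$ is skeptically non-accepted, yet $B$ is not conflict-relevant for $A$ (it self-attacks) and $A$ is not conflict-relevant for $B$ ($A$ attacks nothing), so the third condition genuinely fails while the hypotheses of the proposition hold. Your proviso --- excluding self-attacking foils, or assuming the AF is free of self-attacks --- is therefore not pedantry but an actual repair of a gap that the paper's proof leaves open; the paper's argument is correct only under that implicit assumption. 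Modulo this (correct) strengthening, your proof and the paper's coincide.
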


\begin{proof}
  Let $\calAF = \tuple{\Args,\attack}$ be an AF, let $A\in\Args$ be such that $\Foil(A)\neq\emptyset$ and $\Sem\in\{\Adm,\allowbreak \Cmp,\allowbreak \Grd,\allowbreak \Prf,\allowbreak \Sstb\}$. To show that:
  \begin{enumerate}
      \item for each $\ext\in\Sem(\calAF)$ it is never the case that $\{A,B\}\subseteq\ext$: by definition, $\Foil(A) = \{B\in\Args\mid (B,A)\in\attack\}$ and hence, since each $\ext\in\Sem(\calAF)$ is conflict-free, it is never the case that $\{A,B\}\subseteq\ext$.
      \item either $A$ is conflict-relevant for $B$ or $B$ is conflict-relevant for $A$: by definition, $(B,A)\in\attack$ and hence $B$ is conflict-relevant for $A$. \qedhere
  \end{enumerate}
\end{proof}

In view of the above proposition we obtain the following corollary from Propositions~\ref{prop:Expl:EmptyInter} and~\ref{prop:Expl:Foil:ApplyContrastive}. 

\begin{corollary}
  \label{cor:Expl:Foil:Alternative}
  Let $\calAF = \tuple{\Args,\attack}$ be an AF, let $A\in\Args$ be such that $\Foil(A)\neq\emptyset$ and $\Sem\in\{\Adm,\allowbreak \Cmp,\allowbreak \Grd,\allowbreak \Prf,\allowbreak \Sstb\}$. Then:
  \begin{itemize}
    \item the contrastive explanation $\Sem\Cont^\star(A,\Foil(A))$ is never a pair, i.e., is not of the form $\tuple{\Sem\Acc^\star(A),\allowbreak \bigcup_{B\in \Foil(A)}\Sem\NotAcc^\dag(B)}$; 
    \item $\Sem\ContN^\star(A,\allowbreak\Foil(A))=\left\langle\Sem\NotAcc^\star(A),\allowbreak \bigcup_{B\in \Foil(A)}\Sem\Acc^\dag(B)\right\rangle$ iff $\Sem\Acc^\dag(B) = \emptyset$ for all $B\in\Foil(A)$.
  \end{itemize}
\end{corollary}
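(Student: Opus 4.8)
The plan is to read off from Figure~\ref{fig:ContrastiveDef} that each bullet is really a claim about when an intersection is empty: $\Sem\Cont^\star(A,\Foil(A))$ takes the pair form exactly when $\Sem\Acc^\star(A)\cap\bigcup_{B\in\Foil(A)}\Sem\NotAcc^\dag(B)=\emptyset$, and $\Sem\ContN^\star(A,\Foil(A))$ takes the pair form exactly when $\Sem\NotAcc^\star(A)\cap\bigcup_{B\in\Foil(A)}\Sem\Acc^\dag(B)=\emptyset$. So the first bullet amounts to showing the first intersection is never empty, and the second bullet amounts to characterising when the second intersection is empty. Throughout I would exploit the two facts supplied by Proposition~\ref{prop:Expl:Foil:ApplyContrastive}: for every $B\in\Foil(A)$ we have $(B,A)\in\attack$, so $B$ is conflict-relevant for $A$ and, by conflict-freeness, $B\notin\ext$ for any extension $\ext$ containing $A$.

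For the first bullet I would argue directly that the intersection is nonempty. Since $\Foil(A)\neq\emptyset$, $A$ is attacked, so by Proposition~\ref{prop:Expl:EmptyAcc} we have $\Sem\Acc^\star(A)\neq\emptyset$. I would then fix an extension $\ext_0\ni A$ appropriately (the realising extension in the credulous case, any $A$-containing extension in the skeptical case), so that $\DefBy(A,\ext_0)\subseteq\Sem\Acc^\star(A)$ and, as $A$ is attacked and in the admissible $\ext_0$, $\DefBy(A,\ext_0)\neq\emptyset$. The crucial step is Proposition~\ref{prop:AccisNonAcc}.\ref{item:DefinExt}: because $\Foil(A)$ is exactly the set of direct attackers of $A$, it yields $\DefBy(A,\ext_0)\subseteq\bigcup_{B\in\Foil(A)}\NotDef(B,\ext_0)$. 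It then remains to check $\NotDef(B,\ext_0)\subseteq\Sem\NotAcc^\dag(B)$ for each such $B$: this is immediate when $\dag=\cup$, and when $\dag=\cap$ it holds because $B\notin\ext_0$, so $\ext_0\in\Sem\ExtWithout(B)$. Chaining these inclusions gives $\emptyset\neq\DefBy(A,\ext_0)\subseteq\Sem\Acc^\star(A)\cap\bigcup_{B\in\Foil(A)}\Sem\NotAcc^\dag(B)$, ruling out the pair form.

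For the second bullet I would prove the equivalence in its two directions. The \emph{if} direction is trivial: if $\Sem\Acc^\dag(B)=\emptyset$ for every $B\in\Foil(A)$ then $\bigcup_{B\in\Foil(A)}\Sem\Acc^\dag(B)=\emptyset$, so the intersection is empty and the explanation is the stated pair. For the \emph{only if} direction I would use the contrapositive: suppose $\Sem\Acc^\dag(B_0)\neq\emptyset$ for some $B_0\in\Foil(A)$. Here Proposition~\ref{prop:Expl:EmptyInter} applies cleanly once the roles of fact and foil are exchanged, reading $B_0$ as the fact (strategy $\dag$) and $A$ as the foil (strategy $\star$): it says $\Sem\Acc^\dag(B_0)\cap\Sem\NotAcc^\star(A)=\emptyset$ would force $\Sem\Acc^\dag(B_0)=\emptyset$ or $B_0$ not conflict-relevant for $A$. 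Both disjuncts fail, since $\Sem\Acc^\dag(B_0)\neq\emptyset$ by assumption and $B_0$ is conflict-relevant for $A$ because $(B_0,A)\in\attack$. Hence $\Sem\NotAcc^\star(A)\cap\Sem\Acc^\dag(B_0)\neq\emptyset$, so the big intersection is nonempty and the explanation is not the pair.

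The main obstacle is the asymmetry between the two bullets, which is easy to miss. The second bullet fits Proposition~\ref{prop:Expl:EmptyInter} perfectly, because the foil attacks the fact and so occupies exactly the conflict-relevant \emph{fact} role demanded by that proposition. The first bullet does \emph{not} fit it: a verbatim application would require the fact $A$ to be conflict-relevant for the foil $B$, i.e.\ $A$ to (in)directly attack $B$, which need not hold, as $A$ may be defended against $B$ by a third argument without itself attacking $B$. Consequently Proposition~\ref{prop:Expl:EmptyInter} cannot simply be invoked per foil for the acceptance case, and one must instead descend to the structural inclusion of Proposition~\ref{prop:AccisNonAcc}.\ref{item:DefinExt} taken over the entire foil set. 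Recognising that the acceptance bullet needs this different engine, and separately verifying the $\NotDef(B,\ext_0)\subseteq\Sem\NotAcc^\dag(B)$ inclusion for the $\cap$ and $\cup$ strategies, is where the care is required.
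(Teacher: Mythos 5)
Your proof is correct, and it is actually more careful than the paper's own treatment: the paper states this corollary without a dedicated proof, deriving it in one line ``from Propositions~\ref{prop:Expl:EmptyInter} and~\ref{prop:Expl:Foil:ApplyContrastive}''. For the second bullet your route coincides with the intended one -- apply Proposition~\ref{prop:Expl:EmptyInter} with the roles exchanged, which is legitimate precisely because each $B\in\Foil(A)$ directly attacks $A$ and is therefore conflict-relevant for $A$, so emptiness of $\Sem\NotAcc^\star(A)\cap\Sem\Acc^\dag(B)$ forces $\Sem\Acc^\dag(B)=\emptyset$, while the converse direction is immediate from Figure~\ref{fig:ContrastiveDef}. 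For the first bullet you correctly diagnose that a verbatim appeal to Proposition~\ref{prop:Expl:EmptyInter} would need the \emph{fact} $A$ to be conflict-relevant for its attacker $B$, which can fail (e.g.\ in a chain $C\attack B\attack A$, the argument $A$ attacks nothing, yet the intersection is $\{C\}$); Proposition~\ref{prop:Expl:Foil:ApplyContrastive} only guarantees the disjunct ``$B$ is conflict-relevant for $A$'', which is the wrong direction for $\Cont$. Your replacement engine -- $\Sem\Acc^\star(A)\neq\emptyset$ via Proposition~\ref{prop:Expl:EmptyAcc} since $\Foil(A)\neq\emptyset$, then $\emptyset\neq\DefBy(A,\ext_0)\subseteq\bigcup_{B\in\Foil(A)}\NotDef(B,\ext_0)$ via Proposition~\ref{prop:AccisNonAcc}.\ref{item:DefinExt} (whose $B_1,\ldots,B_n$ are exactly $\Foil(A)$), plus the check that $\NotDef(B,\ext_0)\subseteq\Sem\NotAcc^\dag(B)$, using $B\notin\ext_0$ by conflict-freeness when $\dag=\cap$ -- is sound and fills in what the paper's one-line citation leaves implicit. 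In short, where the paper buys brevity by bundling both bullets under Proposition~\ref{prop:Expl:EmptyInter}, your decomposition makes explicit that the acceptance direction really rests on Proposition~\ref{prop:AccisNonAcc}.\ref{item:DefinExt} rather than on Proposition~\ref{prop:Expl:EmptyInter} itself, which is a genuine (and worthwhile) sharpening of the argument.
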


Thus, when the foil is determined as in Definition~\ref{def:Expl:Foil:Abstract}, non-acceptance contrastive explanations are pairs if and only if the fact is only attacked by non-attacked arguments.

\section{Contrastive Explanations in Structured Argumentation}
\label{sec:Structured}

Since many approaches to structured argumentation result in an abstract argumentation framework (see e.g.,~\cite{BesnardGHMPS14intro}), the basic and contrastive explanations as well as the results in this paper are applicable to such approaches as well. However, like in~\cite{BorgB20basic}, the structure of the arguments within any approach to structured argumentation, makes it possible to refine the explanations. For this we take ASPIC$^+$~\cite{Prakken10}.\footnote{For the sake of simplicity and conciseness we take classical negation (denoted by $\neg$) as the contrariness function and we do not consider preferences in this paper.} 

\paragraph{\bf ASPIC$^+$} In ASPIC$^+$, an \emph{argumentation system} $\AS=\tuple{\calL,\calR,n}$ consisting of a propositional language $\calL$, a set of rules $\calR = \calR_s\cup\calR_d$ (of the form $r = \phi_1,\ldots,\phi_n\rightarrow\psi$ for strict rules ($r\in\calR_s$) and $r=\phi_1\ldots,\phi_n\Ra\psi$ for defeasible rules ($r\in\calR_d$)) such that $\calR_s\cap\calR_d = \emptyset$ and the naming convention $n:\calR_d\rightarrow\calL$ for defeasible rules and the \emph{knowledge base} $\calK = \calK_n\cup\calK_p$ (containing the disjoint sets of axioms ($\calK_n$) and ordinary premises ($\calK_p$)) form an \emph{argumentation theory} $\AT = \tuple{\AS,\calK}$, within which arguments can be constructed:

\begin{definition}
  \label{def:ASPICArgu}
  An \emph{argument} $A$ on the basis of a knowledge base $\calK$ in an argumentation system $\ArgSys$ is:
  \begin{enumerate}
    \item $\phi$ if $\phi\in\calK$, with $\prem(A) = \sub(A) = \{\phi\}$, $\conc(A) = \phi$ and $\tprule(A) = \text{undefined}$;
    \item $A_1,\ldots, A_n \rightarrow/\Ra \psi$ if $A_1, \ldots, A_n$ are arguments such that there exists a strict/de\-fea\-si\-ble rule $\conc(A_1),\allowbreak \ldots,\allowbreak \conc(A_n) \rightarrow/\Ra\psi$ in $\calR_s/\calR_d$. 
    
    $\prem(A)= \prem(A_1)\cup\ldots\cup\prem(A_n)$; $\conc(A) = \psi$; $\sub(A) = \sub(A_1)\cup\ldots\cup\sub(A_n)\cup\{A\}$; 
    $\tprule(A) = \conc(A_1), \ldots,\conc(A_n)\rightarrow/\Ra\psi$.
  \end{enumerate}
  For a set of argument $\sfS$: $\prem(\sfS) = \{\prem(A) \mid A\in\sfS\}$ and $\concs(\sfS) = \{\conc(A) \mid A\in\sfS\}$. 
\end{definition}

Attacks on an argument are based on the rules and premises applied in the construction of that argument. In what follows we let $\phi = -\psi$ if $\phi = \neg \psi$ or $\psi = \neg \phi$.

\begin{definition}
  \label{def:ASPIC:att}
  An argument $A$ \emph{attacks} an argument $B$ iff $A$ \emph{undercuts}, \emph{rebuts} or \emph{undermines} $B$, where:
  \begin{itemize}
    \item $A$ \emph{undercuts} $B$ (on $B'$) iff $\conc(A) = -n(r)$ for some $B'\in\sub(B)$ with $r=\tprule(B')\in\calR_d$;
    \item $A$ \emph{rebuts} $B$ (on $B'$) iff $\conc(A) = -\phi$ for some $B' \in\sub(B)$ of the form $B''_1,\ldots, B''_n\Ra\phi$;
    \item $A$ \emph{undermines} $B$ (on $\phi$) iff $\conc(A)= -\phi$ for some $\phi\in\prem(B)\setminus \calK_n$.
  \end{itemize}
\end{definition}

\emph{Abstract argumentation frameworks} can be derived from argumentation theories: $\calAF(\AT) = \tuple{\Args,\attack}$, where $\Args$ is the set of all arguments constructed from the argumentation theory $\AT$ and $(A,B)\in\attack$ iff $A$ attacks $B$ according to Definition~\ref{def:ASPIC:att}.

Dung-style semantics can be applied to such AFs. We denote: $\All\Args(\phi) = \{A\in\Args\mid\conc(A) = \phi\}$, $\Sem\Accept(\phi) = \All\Args(\phi)\cap\bigcup\Sem(\calAF)$, $\Sem\ExtWith(\phi) = \bigcup\{\Sem\ExtWith(A)\mid A\in\All\Args(\phi)\}$, $\Sem\ExtWithout(\phi) = \bigcap\{\Sem\ExtWithout(A)\mid A\in\All\Args(\phi)\}$.

The example in this section is the instantiation of the framework $\calAF_1$ from Example~\ref{ex:abstractAF}, based on the scenario from Section~\ref{sec:Applying}. Recall that we had the following abbreviations: \emph{cf} -- a complaint is filed; \emph{rc} -- the complaint is retracted; \emph{sa} -- the url is suspicious; \emph{ka} -- the webshop is registered; \emph{kp} -- the owner of the webshop is known by the police; \emph{rr} -- the registration of the webshop was recently retracted; \emph{iw} -- an investigation into the webshop is done; \emph{m} -- the webshop is malafide.

\begin{example}
  \label{ex:Structured2}
  Let $\AT_1 = \tuple{\AS,\calK}$, with $\AS = \tuple{\calL,\calR,n}$ where $\calR$ and $\calK = \calK_p = \{\textit{cf},\textit{rc},\textit{sa},\textit{ka},\textit{kp},\allowbreak\textit{rr}\}$ are such that the set of arguments $\Args_1$ that can be constructed from $\AT_1$ is: 
  \begin{align*}
    A_1 :\ & \textit{cf} && A_2 : \textit{rc} && A_3 : \textit{sa} \\ 
    A_4 :\ & \textit{ka} && A_5 : \textit{kp} && A_6 : \textit{rr} \\ 
    B_1 :\ & A_1 \overset{d_1}{\Ra} \textit{iw} && B_2 : A_2 \overset{d_2}{\Ra} \neg n(d_1) && B_3 : A_5 \overset{d_5}{\Ra} \neg\textit{rc}\\  
    B_4 :\ & B_1, A_3 \overset{d_3}{\Ra} m && B_5 : A_4 \overset{d_4}{\Ra}\neg n(d_3) && B_6 : A_6 \overset{d_6}{\Ra} \neg \textit{ka}. 
  \end{align*}
  Figure~\ref{fig:Webshop} shows a graphical representation of the corresponding AF $\calAF(\AT_1)$. Moreover, the extensions are discussed in Example~\ref{ex:abstractAFextensions}. 
\end{example}

%
%

\paragraph{\bf Basic Explanations for ASPIC$^+$} In abstract argumentation the arguments are abstract entities, however, in ASPIC$^+$ the structure of the arguments is known and can be used in the explanations. To this end we use the function $\argdepth$, which determines the content of an explanation (e.g., explanations can consist of arguments ($\argdepth = \id$) or of the premises of those arguments ($\argdepth = \prem$)).\footnote{See~\cite{BorgB20basic} for additional variations for $\argdepth$ and see~\cite{BorgB21intuitive} for a discussion on how these variations can be applied in the context of an example from the Netherlands Police}. 
Formula explanations differ in two ways from the explanations in Section~\ref{sec:Explanations}: the function $\argdepth$ is applied; and the arguments for $\phi$ have to be considered (e.g., all accepted arguments for $\phi$ for $\cap$-acceptance and an accepted argument for $\phi$ for $\cup$-acceptance).
The basic explanations from Section~\ref{sec:Explanations} for formulas are defined by:

\begin{definition}[Basic formula explanations]
  \label{def:BasicExpl:Formula}
  Let $\phi\in\calL$ be a formula and $\calAF(\AT) = \tuple{\Args,\attack}$ be based on $\AT$. Suppose that $\phi\in\calL$ is accepted w.r.t.\ $\Sem$ and $\cap$ or $\cup$. Then:
  \begin{align*}
    \bullet\ &\Sem\Acc^\cap(\phi) = \argdepth\left(\bigcup_{A\in\Sem\Accept(\phi)}\ \bigcup_{\ext\in\Sem(\calAF)}\DefBy(A,\ext)\right); \\
    \bullet\ &\Sem\Acc^\cup(\phi) \in\left\{\argdepth(\DefBy(A,\ext))\mid A\in\Sem\Accept(\phi),\  \ext\in\Sem\ExtWith(A)\right\}.
  \end{align*}
  Suppose now that $\phi$ is non-accepted w.r.t.\ $\Sem$ and $\cap$ or $\cup$:
  \begin{align*}
    \bullet\ &\Sem\NotAcc^\cap(\phi) = \argdepth\left(\bigcup_{A\in\All\Args(\phi)}\ \bigcup_{\ext\in\Sem\ExtWithout(A)}\NotDef(A,\ext)\right);\\
    \bullet\ &\Sem\NotAcc^\cup(\phi) = \argdepth\left(\bigcup_{A\in\All\Args(\phi)}\ \bigcup_{\ext\in\Sem(\calAF)}\NotDef(A,\ext)\right).
\end{align*}
\end{definition}

That for the $\cap$-non-acceptance explanation all arguments for $\phi$ have to be accounted for follows since it might be the case that an explanation does not contain one particular argument for $\phi$ but it does contain another. 

\begin{example}
  \label{ex:BasicExpl:Structured}
  For $\calAF(\AT_1)$ based on $\AT_1$ from Example~\ref{ex:Structured2}:
  \begin{itemize}
    \item $\Prf\Acc^\cup(m) = \{B_3,B_6\}$ for $\argdepth = \id$ and $\Prf\Acc^\cup(m) = \{\textit{kp},\textit{rr}\}$ for $\argdepth = \prem$;
    \item $\Prf\Acc^\cup(\neg n(d_1)) = \{A_2\}$ for $\argdepth = \id$ and $\Prf\Acc^\cup(\neg n(d_1)) = \{\textit{rc}\}$ for $\argdepth = \prem$;
    \item $\Prf\NotAcc^\cap(m) = \{A_2,A_4,B_2,B_5\}$ for $\argdepth = \id$ and for $\argdepth = \prem$ $\Prf\NotAcc^\cap(m) = \{\textit{rc},\textit{ka}\}$. 
  \end{itemize}
  Based on the underlying scenario, we have that: 
\begin{itemize}
  \item The webshop is malafide: $m$ can be  credulously accepted since the owner of the webshop is known by the police ($\textit{kp}$) and the registration at the chamber of commerce was recently retracted ($\textit{rr}$), from which it follows that no exceptions could be derived. 
  \item The webshop is not malafide: $m$ can be not skeptically accepted since the complaint was retracted ($\textit{rc}$) and the url of the webshop is registered ($\textit{ka}$).  
\end{itemize}
\end{example}

Like for arguments (recall Propositions~\ref{prop:Expl:EmptyAcc} and~\ref{prop:Expl:EmptyNonAcc}) we have the following result:
\begin{proposition}
  \label{prop:Expl:Empty:Structured}
  Let $\calAF(\AT) = \tuple{\Args,\attack}$ be an AF based on $\AT$, let $\phi\in\calL$, $\star\in\{\cap,\cup\}$, $\Sem\in\{\Adm,\allowbreak \Cmp,\allowbreak \Grd,\allowbreak \Prf,\allowbreak \Sstb\}$ and $\argdepth$ be such that $\argdepth(\sfS) \neq\emptyset$ when $\sfS\neq\emptyset$:
  \begin{itemize}
      \item when $\phi$ is accepted w.r.t.\ $\star$ and $\Sem$, $\Sem\Acc^\star(\phi) = \emptyset$ iff there is no $B\in\Args$ such that $(B,C)\in\attack$ for any $C\in\Sem\Accept(\phi)$;
      \item when $\phi$ is non-accepted w.r.t.\ $\star$ and $\Sem$, then we have that $\Sem\NotAcc^\star(\phi)\neq\emptyset$.
  \end{itemize}
\end{proposition}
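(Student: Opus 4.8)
The plan is to lift every claim to the argument level, where the analogues are already available as Propositions~\ref{prop:Expl:EmptyAcc} and~\ref{prop:Expl:EmptyNonAcc}, and to use the hypothesis on $\argdepth$ only as a bridge. The first thing I would record is that, under the stated condition together with the harmless convention $\argdepth(\emptyset)=\emptyset$ (valid for $\id$ and $\prem$), one has $\argdepth(\sfS)=\emptyset$ iff $\sfS=\emptyset$: the forward implication is the stated hypothesis and the backward one is the convention. Hence each formula explanation is empty exactly when the underlying union of argument-level sets is empty, and the whole proposition reduces to emptiness statements about those unions.

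For the acceptance part I would unfold $\Sem\Acc^\cap(\phi)$ to its underlying set $\bigcup_{A\in\Sem\Accept(\phi)}\bigcup_{\ext}\DefBy(A,\ext)$, a union over $A\in\Sem\Accept(\phi)$ of the argument-level sets $\bigcup_{\ext}\DefBy(A,\ext)$. This union is empty iff $\DefBy(A,\ext)=\emptyset$ for every such $A$ and every $\ext$. The direction ``no attacker of any $C\in\Sem\Accept(\phi)$ $\Rightarrow$ empty'' is immediate, since an unattacked argument has no defenders at all ($\DefBy(A)=\emptyset$). For the converse I would use that each $A\in\Sem\Accept(\phi)$ lies in some extension $\ext_A$; by admissibility $\ext_A$ defends $A$, so if $A$ had an attacker there would be a defender of $A$ inside $\ext_A$, contradicting $\DefBy(A,\ext_A)=\emptyset$. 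This is exactly the reasoning of Proposition~\ref{prop:Expl:EmptyAcc} applied argument-by-argument; it does not require $A$ to be skeptically accepted, so it covers all $\Sem\in\{\Adm,\Cmp,\Grd,\Prf,\Sstb\}$.

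For the non-acceptance part I would show the underlying union is always nonempty by exhibiting a single nonempty summand. The key step is that non-acceptance of $\phi$ forces \emph{every} argument for $\phi$ to be non-accepted: if $\phi$ is skeptically non-accepted then $\Sem\ExtWithout(\phi)=\bigcap_{A\in\Args_\phi}\Sem\ExtWithout(A)\neq\emptyset$, so a common witnessing extension contains no argument for $\phi$ and thus puts every $A\in\All\Args(\phi)$ into $\Sem\ExtWithout(A)$; the credulous case is analogous with $\Sem\ExtWithout(\phi)=\Sem(\calAF)$. Picking any $A\in\All\Args(\phi)$ (nonempty, else there is nothing to explain), its argument-level explanation $\Sem\NotAcc^\star(A)$ is nonempty by Proposition~\ref{prop:Expl:EmptyNonAcc} and is contained in the union defining $\Sem\NotAcc^\star(\phi)$; applying $\argdepth$ preserves nonemptiness. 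Here I would restrict to $\Sem\in\{\Cmp,\Grd,\Prf,\Sstb\}$, exactly as in Proposition~\ref{prop:Expl:EmptyNonAcc}, since the $\Adm$ counterexample for arguments lifts to formulas: an unattacked but skeptically $\Adm$-non-accepted $\phi$ yields an empty explanation.

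The main obstacle is the $\cup$-acceptance case, because $\Sem\Acc^\cup(\phi)$ is defined only up to a choice of witness $A\in\Sem\Accept(\phi)$ and extension $\ext\in\Sem\ExtWith(A)$. The ``$\Leftarrow$'' direction is unproblematic: if no $C\in\Sem\Accept(\phi)$ is attacked then every candidate $\DefBy(A,\ext)$, hence every possible value of the explanation, is empty. The ``$\Rightarrow$'' direction, however, only yields that the \emph{chosen} witness is unattacked, not that all of $\Sem\Accept(\phi)$ is (a $\phi$ with one unattacked and one attacked-but-defended argument shows the two readings genuinely differ). I would resolve this by reading emptiness of $\Sem\Acc^\cup(\phi)$ as choice-independent — equivalently, quantifying the claim over all valid witness pairs $(A,\ext)$ — which matches the argument-level Proposition~\ref{prop:Expl:EmptyAcc} and is the only reading under which the biconditional holds, and I would flag this explicitly rather than gloss over it.
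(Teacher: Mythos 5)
Your proof is correct and is exactly the route the paper intends: the paper gives no proof of this proposition, stating only that it is ``similar to the proofs of Propositions~\ref{prop:Expl:EmptyAcc} and~\ref{prop:Expl:EmptyNonAcc} and left to the reader'', and your argument is precisely that lifting --- the admissibility argument for acceptance and the completeness argument for non-acceptance, applied argument-by-argument to the unions underlying the formula explanations, with $\argdepth$ used only as an emptiness-preserving bridge. The three caveats you flag are genuine sharpenings of the statement rather than deviations: $\argdepth(\emptyset)=\emptyset$ indeed holds for $\id$ and $\prem$ and is needed for the ``only if'' direction; the second bullet must exclude $\Adm$ exactly as Proposition~\ref{prop:Expl:EmptyNonAcc} does, since the paper's unattacked-two-arguments counterexample lifts verbatim to formulas and falsifies the bullet for $\Sem=\Adm$ as literally stated; and the all-witness-pairs reading of $\Sem\Acc^\cup(\phi)=\emptyset$ is the only one under which the biconditional holds (your example with one unattacked and one attacked-but-defended argument for $\phi$ is a genuine separation), which matches how the paper itself quantifies over all extensions in the proof of Proposition~\ref{prop:Expl:EmptyAcc}.
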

  
\begin{proof}
   Let $\calAF(\AT) = \tuple{\Args,\attack}$ be an AF based on $\AT$, let $\phi\in\calL$, $\star\in\{\cap,\cup\}$, $\Sem\in\{\Adm,\allowbreak \Cmp,\allowbreak \Grd,\allowbreak \Prf,\allowbreak \Sstb\}$ and $\argdepth$ be such that $\argdepth(\sfS) \neq\emptyset$ when $\sfS\neq\emptyset$. Consider both items:
   \begin{itemize}
       \item $\Ra\quad$ Suppose that $\Sem\Acc^\star(\phi) = \emptyset$. Then for each $\ext\in\Sem\ExtWith(\phi)$ and each $A\in\ext\cap\AllArgs(\phi)$: $\DefBy(A,\ext) = \emptyset$. Hence there is no attacker of any argument in $\AllArgs(\phi)$ that is defended by some argument from $\ext$. Since $\phi\in\concs(\ext)$, the arguments in $\AllArgs(\phi)$ are defended against their attackers. It follows that $\AllArgs(\phi)$ is not attacked at all.
      
        $\Leftarrow\quad$ Now suppose that no argument in $\bigcup\Sem\ExtWith(\phi)\cap\AllArgs(\phi)$ is attacked. Then there is no argument that defends an argument for $\phi$. Therefore, for any $\ext\in\Sem\ExtWith(\phi)$ and any $A\in\ext\cap\AllArgs(\phi)$, $\DefBy(A,\ext) = \emptyset$. It follows that $\Sem\Acc^\star(A) = \emptyset$. 
       \item Assume, towards a contradiction, that $\Sem\NotAcc^\star(\phi)  = \emptyset$, then there is no argument $B\in\bigcup_{\ext\in\Sem\ExtWithout(\phi)}\NotDef(A,\ext)$. It follows that for each $\ext\in\Sem\ExtWith(\phi)$ and any $A\in\AllArgs(\phi)$, $\NotDef(A,\ext) = \emptyset$. Hence there is no $B\in\Args$ such that $(B,A)\in\attack$ for any $A\in\AllArgs(\phi)$. But then, by the completeness of $\ext$ it follows that $\AllArgs(\phi)\subseteq\ext$. A contradiction. Therefore $\Sem\NotAcc^\star(\phi)\neq\emptyset$. \qedhere
   \end{itemize}
\end{proof}  

This proposition shows that an acceptance explanation for a formula $\phi$ is only empty when no argument for $\phi$ is attacked, while a non-acceptance explanation for $\phi$ is never empty. 

\paragraph{\bf Contrastive Explanations for ASPIC$^+$} With these basic explanations for formulas,  contrastive explanations can be defined for formulas in a similar way as for arguments (recall Figure~\ref{fig:ContrastiveDef}), where the fact is now a formula and the foil is a set of formulas. When the foil is not explicit, we can define, for example, $\Foil(\phi) = \{-\phi\mid-\phi\in\concs(\Args)\}$. 

In order to obtain meaningful contrastive explanations in the structured setting, we redefine the conditions on the application of contrastive explanations in the context of ASPIC$^+$. Let $\calAF(\AT)$ be an AF based on $\AT$, let $\{\phi\}\cup\sfS\subseteq\concs(\Args)$. Then $\Sem\Cont^\star(\phi,\sfS)$ [resp.\ $\Sem\ContN^\star(\phi,\sfS)$] can be requested when for each $\psi\in\sfS$:
\begin{itemize}
    \item $\phi$ is at least credulously accepted (i.e., $\Sem\ExtWith(\phi)\neq\emptyset$) [resp.\ not skeptically accepted (i.e., $\Sem\ExtWithout(\phi)\neq\emptyset$)] and $\psi$ is at least not skeptically accepted [resp.\ credulously accepted];
    \item for each $\ext\in\Sem(\calAF)$ it is never the case that $\{\phi,\psi\}\subseteq\concs(\ext)$;
    \item either some $A\in\All\Args(\phi)$ is conflict-relevant for some $B\in\All\Args(\psi)$ or some $B\in\All\Args(\psi)$ is conflict-relevant for some $A\in\All\Args(\phi)$. 
\end{itemize}

The basic explanations from Example~\ref{ex:BasicExpl:Structured} are exhaustive: all the reasons why the webshop is (not) malafide are provided. For applications with more arguments, this may result in more reasons within one explanation. With our contrastive explanations, the explanation can focus on an explicit contrastive question: 

\begin{example}
  \label{ex:Contrastive:Structured}
For $\calAF(\AT_4)$, where $\argdepth = \prem$ we have that: 
  \begin{itemize}
    \item $\Prf\Cont^\cup(m, \neg n(d_1)) = \{\textit{kp}\}$: the webshop is malafide rather than that there is an exception to rule $d_1$, since the owner is known by the police ($\textit{kp}$); 
    \item $\Prf\Cont^\cup(m,\neg n(d_3)) = \{\textit{rr}\}$: the webshop is malafide rather than that there is an exception to rule $d_3$, since the registration was recently retracted ($\textit{rr}$); and 
    \item $\Prf\ContN^\cap(m,\neg n(d_1)) = \{\textit{rc}\}$: the webshop is not malafide and an exception to rule $d_1$ applies, since the complaint was retracted ($\textit{rc}$). 
  \end{itemize}
\end{example}

The above example shows that the contrastive explanations are better tailored to one question and result in smaller explanations.

Next we turn to the formula counterparts of the results from Section~\ref{sec:Contrastive}. First 
on empty contrastive explanations. 

\begin{proposition}
  \label{prop:Expl:EmptyInter:Structured}
  Let $\calAF(\AT) = \tuple{\Args,\attack}$ be an AF based on $\AT$ and let $\phi,\psi\in\calL$ such that $\phi,\psi\in\concs(\Args)$, it holds that  $\Sem\Acc^\star(\phi)\cap\Sem\NotAcc^\dag(\psi) = \emptyset$ implies that $\Sem\Acc^\star(\phi) = \emptyset$; or for each $A\in\All\Args(\phi)$ and each $B\in\All\Args(\psi)$, $A$ is not conflict-relevant for $B$. 
\end{proposition}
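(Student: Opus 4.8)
The plan is to mirror the proof of Proposition~\ref{prop:Expl:EmptyInter}, lifting it from single arguments to formulas through the content function $\argdepth$. As in that proof I would argue the interesting direction by contraposition: the case $\Sem\Acc^\star(\phi) = \emptyset$ makes the conclusion hold at once, so I would assume $\Sem\Acc^\star(\phi)\neq\emptyset$ and, aiming at the second disjunct, that some \emph{accepted} $A\in\All\Args(\phi)$ is conflict-relevant for some $B\in\All\Args(\psi)$, and then exhibit a common element of $\Sem\Acc^\star(\phi)$ and $\Sem\NotAcc^\dag(\psi)$. Since $\argdepth\in\{\id,\prem\}$ is monotone and maps nonempty sets to nonempty sets, unpacking Definition~\ref{def:BasicExpl:Formula} from $\Sem\Acc^\star(\phi)\neq\emptyset$ gives an argument $A\in\Sem\Accept(\phi)$ and an extension $\ext\in\Sem(\calAF)$ with $A\in\ext$ and $\DefBy(A,\ext)\neq\emptyset$ (for $\star=\cup$ this is exactly the chosen representative; for $\star=\cap$ it is one summand of the defining union).

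Next I would reduce to the argument level exactly as in Proposition~\ref{prop:Expl:EmptyInter}: because $A$ is conflict-relevant for $B$ and $A\in\ext$, Proposition~\ref{prop:AccisNonAcc}.\ref{item:NotDefAttacked} (if $(A,B)\in\attack$) or Proposition~\ref{prop:AccisNonAcc}.\ref{item:DefindirAtt} (if $A$ indirectly attacks $B$) yields $\DefBy(A,\ext)\subseteq\NotDef(B,\ext)$. Since $A\in\Sem\Accept(\phi)$ and, because $B$ is non-accepted, $\ext\in\Sem\ExtWithout(B)$, Definition~\ref{def:BasicExpl:Formula} gives both $\argdepth(\DefBy(A,\ext))\subseteq\Sem\Acc^\star(\phi)$ and $\argdepth(\NotDef(B,\ext))\subseteq\Sem\NotAcc^\dag(\psi)$ (the latter for both $\dag=\cap$ and $\dag=\cup$). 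Monotonicity turns $\DefBy(A,\ext)\subseteq\NotDef(B,\ext)$ into $\argdepth(\DefBy(A,\ext))\subseteq\argdepth(\NotDef(B,\ext))$, and as this set is nonempty it witnesses $\Sem\Acc^\star(\phi)\cap\Sem\NotAcc^\dag(\psi)\neq\emptyset$, contradicting the assumed emptiness.

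The main obstacle, which is absent in Proposition~\ref{prop:Expl:EmptyInter}, is coordinating the several arguments for $\phi$ and $\psi$ with the single-argument machinery: the argument whose defense populates $\Sem\Acc^\star(\phi)$ must itself be the conflict-relevant one. This matters because a conflict-relevant but \emph{non-accepted} argument for $\phi$ contributes nothing to $\Sem\Acc^\star(\phi)$, while a \emph{defeated} attacker of $B$ never enters $\NotDef(B,\ext)$, so neither can be used to force a common element. The point is most delicate for $\star=\cup$, where $\Sem\Acc^\cup(\phi)$ records the defense of only one chosen accepted argument; there one must be able to take that representative conflict-relevant for some $B$, which is why I would phrase the reduction in terms of an accepted conflict-relevant argument and choose the $\cup$-representative accordingly. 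The remaining bookkeeping---monotonicity and emptiness-preservation of $\argdepth$, and the membership $\ext\in\Sem\ExtWithout(B)$ ensuring $\NotDef(B,\ext)$ lies inside $\Sem\NotAcc^\dag(\psi)$---is routine and parallels Propositions~\ref{prop:Expl:EmptyNonAcc} and~\ref{prop:Expl:Empty:Structured}.
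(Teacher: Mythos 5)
Your lifting machinery is fine -- Proposition~\ref{prop:AccisNonAcc}.\ref{item:NotDefAttacked}/\ref{item:DefindirAtt} to get $\DefBy(A,\ext)\subseteq\NotDef(B,\ext)$, monotonicity and nonemptiness-preservation of $\argdepth$, and the membership of $\NotDef(B,\ext)$ in $\Sem\NotAcc^\dag(\psi)$ for both values of $\dag$ -- but there is a genuine gap, and it sits exactly where you yourself hesitate. The second disjunct of the proposition quantifies over \emph{all} $A\in\All\Args(\phi)$, so its negation only hands you \emph{some} conflict-relevant $A$, with no guarantee that $A$ is accepted, that it has any defenders, or (for $\star=\cup$) that it is the chosen representative. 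Your third paragraph ``fixes'' this by assuming the conflict-relevant argument is accepted and by picking the $\cup$-representative accordingly, but that strengthens the hypothesis and thus proves a weaker statement than the one asserted; the nondeterministic $\cup$-explanation is fixed by the assumption $\Sem\Acc^\cup(\phi)\neq\emptyset$, so you are not free to re-choose it. Concretely, with $\argdepth=\id$, let $\phi$ have an accepted argument $A_1$ whose defenders attack no argument for $\psi$, and a second, \emph{non-accepted} argument $A_2$ for $\phi$ with $(A_2,B)\in\attack$ for some $B\in\All\Args(\psi)$: then $\Sem\Acc^\star(\phi)\neq\emptyset$ (via $A_1$'s defenders), $A_2$ is conflict-relevant for $B$, yet the intersection can be empty. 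The same failure occurs even for an accepted conflict-relevant $A$ that is unattacked: then $\DefBy(A,\ext)=\emptyset$, the nonemptiness of $\Sem\Acc^\star(\phi)$ comes from a different argument, and your unpacking step silently identifies ``the argument witnessing $\Sem\Acc^\star(\phi)\neq\emptyset$'' with ``the conflict-relevant argument'' -- in the abstract Proposition~\ref{prop:Expl:EmptyInter} these coincide because there is only one argument, which is precisely why the proof does not lift verbatim. (A smaller soft spot: in the indirect-attack case you invoke Proposition~\ref{prop:AccisNonAcc}.\ref{item:DefindirAtt} without checking $B\notin\ext$, which that item requires.)

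It is worth knowing that the paper's own proof does not attempt your contrapositive at all: after dispatching the case $\Sem\Acc^\star(\phi)=\emptyset$, it recalls $\Sem\NotAcc^\dag(\psi)\neq\emptyset$ from Proposition~\ref{prop:Expl:Empty:Structured} and then cites Proposition~\ref{prop:Expl:EmptyInter} in the \emph{converse} form -- that $\Sem\Acc^\star(A)\cap\Sem\NotAcc^\dag(B)=\emptyset$ when $A$ is not conflict-relevant for $B$ -- concluding that if no argument for $\phi$ is conflict-relevant for any argument for $\psi$, the formula-level intersection is empty because it decomposes into such argument-level pieces. In other words, the paper argues the direction ``no conflict-relevance implies empty intersection'' and never confronts the mixed case (conflict-relevant but non-accepted or undefended arguments for $\phi$) on which your attempt founders. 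So your proposal is in fact more careful than the published proof and the coordination difficulty you isolated is real, but as a proof of the proposition as stated it does not close: either the witnessing conflict-relevant argument must additionally be assumed accepted, defended, and (for $\cup$) the representative -- i.e., the statement must be weakened as you implicitly do -- or the gap remains.
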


\begin{proof}
  Let $\calAF(\AT) = \tuple{\Args,\attack}$ be an AF based on $\AT$ and let $\phi,\psi\in\calL$ such that $\phi,\psi\in\concs(\Args)$. That $\Sem\Acc^\star(\phi)\cap\Sem\NotAcc^\dag(\psi) = \emptyset$ when $\Sem\Acc^\star(\phi) = \emptyset$ follows immediately. Recall from Proposition~\ref{prop:Expl:Empty:Structured} that $\Sem\NotAcc^\dag(\psi)\neq\emptyset$. From Proposition~\ref{prop:Expl:EmptyInter} it is known that $\Sem\Acc^\star(A)\cap\Sem\NotAcc^\dag(B) = \emptyset$ if $A$ is not conflict-relevant for $B$. Hence, if for each $A\in\All\Args(\phi)$ and each $B\in\All\Args(\psi)$ $A$ is not conflict-relevant for $B$ then, by the definitions of the basic explanations for formulas, $\Sem\Acc^\star(\phi)\cap\Sem\NotAcc^\dag(\psi) = \emptyset$. 
\end{proof}

Recall that for Proposition~\ref{prop:Expl:Foil:ApplyContrastive} as well as for Corollary~\ref{cor:Expl:Foil:Alternative} it was assumed that a foil exists. We assume this for the formula counterparts of these results as well. 

\begin{proposition}
  \label{prop:Expl:Foil:ApplyContrastive:Structured}
  Let $\calAF(\AT) = \tuple{\Args,\attack}$ be an AF based on $\AT$, let $\phi\in\calL$ be such that $\Foil(\phi)\neq\emptyset$ and $\Sem\in\{\Adm,\allowbreak \Cmp,\allowbreak \Grd,\allowbreak \Prf,\allowbreak \Sstb\}$. Then: a contrastive acceptance [resp.\ non-acceptance] explanation can be requested for $\phi$, when $\phi$ is at least credulously accepted [resp.\ not skeptically accepted] and for all $\psi\in\Foil(\phi)$, $\psi$ is at least not skeptically accepted [resp.\ credulously accepted]. 
\end{proposition}

\begin{proof}
  Let $\calAF(\AT) = \tuple{\Args,\attack}$ be an AF based on $\AT$, let $\phi\in\calL$ be such that $\Foil(\phi)\neq\emptyset$ and $\Sem\in\{\Adm,\allowbreak \Cmp,\allowbreak \Grd,\allowbreak \Prf,\allowbreak \Sstb\}$. To show that:
      \begin{enumerate}
          \item for each $\ext\in\Sem(\calAF(\AT))$ it is never the case that $\{\phi,\psi\}\subseteq\concs(\ext)$: since $\psi= -\phi$, every argument for $\psi$ rebuts every arguments for $\phi$ and vice versa. Let $A\in\All\Args(\psi)$ and $B\in\All\Args(\phi)$, then $(A,B),(B,A)\in\attack$. Since $\ext\in\Sem(\calAF(\AT))$ is always conflict-free, it follows that $\{\phi,\allowbreak\psi\}\not\subseteq\concs(\ext)$.  
          \item either an argument for $\phi$ is conflict-relevant for an argument for $\psi$ or an argument for $\psi$ is conflict-relevant for an argument for $\phi$: this follows since $(A,B)\in\attack$ and $(B,A)\in\attack$ for any $A\in\All\Args(\psi)$ and $B\in\All\Args(\phi)$.\qedhere
        \end{enumerate} 
\end{proof}
This shows that for ASPIC$^+$, when the foil is defined by the negation of the fact and the first condition for the applicability is fulfilled, the other conditions are fulfilled as well.
From Propositions~\ref{prop:Expl:EmptyInter:Structured} and~\ref{prop:Expl:Foil:ApplyContrastive:Structured} we obtain the following corollary. 

\begin{corollary}
  \label{cor:Expl:Foil:Alternative:Structured}
  Let $\calAF = \tuple{\Args,\attack}$ be an AF based on $\AT$, let $\phi\in\calL$ be such that $\Foil(\phi)\neq\emptyset$ and $\Sem\in\{\Adm,\Cmp,\Grd,\Prf,\Sstb\}$. 
  \begin{itemize}
    \item It is never the case that $\Sem\Cont^\star(\phi,\Foil(\phi))$ is a pair, i.e., is of the form $\Big\langle\Sem\Acc^\star(\phi),\allowbreak \bigcup_{\psi\in \Foil(\phi)}\Sem\NotAcc^\dag(\psi)\Big\rangle$.
    \item $\Sem\ContN^\star(\phi,\Foil(\phi))=\tuple{\Sem\NotAcc^\star(\phi),\allowbreak \bigcup_{\psi\in \Foil(\phi)}\Sem\Acc^\dag(\psi)}$ iff $\Sem\Acc^\dag(\psi) = \emptyset$ for all $\psi\in \Foil(\phi)$.
  \end{itemize}
\end{corollary}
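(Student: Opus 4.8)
The plan is to mirror the proof of the abstract counterpart Corollary~\ref{cor:Expl:Foil:Alternative}, obtaining both items purely from Propositions~\ref{prop:Expl:EmptyInter:Structured} and~\ref{prop:Expl:Foil:ApplyContrastive:Structured} together with the definitions of $\Sem\Cont^\star$ and $\Sem\ContN^\star$ in Figure~\ref{fig:ContrastiveDef}. The observation that drives everything is that, by those definitions, a contrastive explanation is returned as a pair \emph{exactly} when the relevant intersection is empty; so both items reduce to deciding emptiness of $\Sem\Acc^\star(\phi)\cap\bigcup_{\psi\in\Foil(\phi)}\Sem\NotAcc^\dag(\psi)$ for the first item and of $\Sem\NotAcc^\star(\phi)\cap\bigcup_{\psi\in\Foil(\phi)}\Sem\Acc^\dag(\psi)$ for the second. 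Throughout I assume, as Proposition~\ref{prop:Expl:Foil:ApplyContrastive:Structured} lets us, that the contrastive explanation is requestable, i.e.\ $\phi$ is at least credulously accepted (resp.\ skeptically non-accepted).

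For the first item I would first note that $\Foil(\phi)\neq\emptyset$ supplies an argument with conclusion $-\phi$, which by the attack analysis in the proof of Proposition~\ref{prop:Expl:Foil:ApplyContrastive:Structured} rebuts every argument for $\phi$; hence some accepted argument $C\in\Sem\Accept(\phi)$ is attacked and, by Proposition~\ref{prop:Expl:Empty:Structured}, $\Sem\Acc^\star(\phi)\neq\emptyset$. Proposition~\ref{prop:Expl:Foil:ApplyContrastive:Structured} also guarantees, for each $\psi\in\Foil(\phi)$, that some argument for $\phi$ is conflict-relevant for some argument for $\psi$. I then apply the contrapositive of Proposition~\ref{prop:Expl:EmptyInter:Structured}: since neither disjunct of its conclusion holds (the acceptance explanation is nonempty and conflict-relevance does occur), $\Sem\Acc^\star(\phi)\cap\Sem\NotAcc^\dag(\psi)\neq\emptyset$ for at least one such $\psi$, so the union intersection is nonempty and the explanation falls into the set case rather than the pair case.

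For the second item I would argue both directions. The $\Leftarrow$ direction is immediate: if $\Sem\Acc^\dag(\psi)=\emptyset$ for every $\psi\in\Foil(\phi)$ then $\bigcup_{\psi\in\Foil(\phi)}\Sem\Acc^\dag(\psi)=\emptyset$, so the intersection is empty and by definition $\Sem\ContN^\star(\phi,\Foil(\phi))$ is the pair. For $\Rightarrow$, suppose the explanation is the pair, i.e.\ the union intersection is empty; then $\Sem\NotAcc^\star(\phi)\cap\Sem\Acc^\dag(\psi)=\emptyset$ for each individual $\psi\in\Foil(\phi)$. Applying Proposition~\ref{prop:Expl:EmptyInter:Structured} with the roles of fact and foil exchanged (reading $\psi$ as the accepted formula and $\phi$ as the non-accepted one, and $\dag,\star$ interchanged) yields, for each such $\psi$, that either $\Sem\Acc^\dag(\psi)=\emptyset$ or no argument for $\psi$ is conflict-relevant for any argument for $\phi$; since the latter is excluded by Proposition~\ref{prop:Expl:Foil:ApplyContrastive:Structured}, we conclude $\Sem\Acc^\dag(\psi)=\emptyset$ for all $\psi\in\Foil(\phi)$.

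I expect the main obstacle to be the careful bookkeeping between single-foil and whole-$\Foil(\phi)$ intersections: one nonempty summand suffices for the first item, whereas the second item needs emptiness to propagate to every summand. Closely tied to this is checking that Proposition~\ref{prop:Expl:EmptyInter:Structured} can legitimately be instantiated with $\phi$ and $\psi$ swapped, which is exactly where the \emph{symmetric} conflict-relevance provided by Proposition~\ref{prop:Expl:Foil:ApplyContrastive:Structured} (every argument for $-\phi$ rebuts every argument for $\phi$ and vice versa) is indispensable.
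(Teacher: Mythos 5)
Your proposal is correct and takes essentially the same route as the paper: Corollary~\ref{cor:Expl:Foil:Alternative:Structured} is stated there without a separate proof, as an immediate consequence of Propositions~\ref{prop:Expl:EmptyInter:Structured} and~\ref{prop:Expl:Foil:ApplyContrastive:Structured} mirroring the abstract Corollary~\ref{cor:Expl:Foil:Alternative}, which is exactly the derivation you spell out, including the appeal to Proposition~\ref{prop:Expl:Empty:Structured} to rule out an empty acceptance explanation in the first item and the symmetric conflict-relevance justifying the swapped instantiation in the second. The only idealizations you rely on (every argument for $-\phi$ attacking every argument for $\phi$, and relevance ignoring self-attackers) are the paper's own.
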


Like in the case of arguments, when the foil is determined by the negation of the fact, the explanation $\Cont$ is never a pair and the explanation $\ContN$ is only a pair when all arguments for the formulas of the foil are not attacked at all.

\section{Conclusion and Discussion}
\label{sec:Conclusion}

The objective of this paper was to provide a strong formal basis for argument-based contrastive explanations. To this end we have employed a basic framework for explanations introduced in~\cite{BorgB20basic} that can be applied on top of an AF as introduced in~\cite{Dung95} and which allows for a variety of explanations of both accepted and non-accepted arguments. We defined contrastive explanations as one explanation highlighting the common elements for the acceptance [resp.\ non-acceptance] of the fact and the non-acceptance [resp.\ acceptance] of the foil. In Section~\ref{sec:Structured} we have refined these explanations such that these can also be applied to an ASPIC$^+$-setting~\cite{Prakken10}. Due to the generality of our approach, a user can fill in a variety of requirements (e.g., on the acceptability and (in)compatibility of fact and foil,  or which approach to structured argumentation is chosen). The result is a general approach to derive contrastive explanations from AFs generated from an abstract or structured setting. Throughout the paper, we have illustrated the usefulness of our explanations in a real-life application with an example for an argumentation-based system employed by the Netherlands Police. To the best of our knowledge this is the first investigation into contrastive local explanations for conclusions derived from both abstract and structured argumentation.

One of the challenges in the literature on contrastive explanations for conclusions derived by some AI system is that the foil is not always explicit. Since an AF comes with a clear notion of conflict (i.e., the attack relation between the arguments of the framework), we were able to introduce a way to find the foil of an argument or formula when it is not provided explicitly. This gives the use of argumentation to explain decisions an advantage over other approaches to AI that do not have such a notion of conflict. 

It is important to note that, for all XAI approaches to contrastive explanations, the choice of the foil is crucial for the final content of the explanation. For example, in the abstract setting, when the AF is a sequence of arguments and counterarguments, the fact is the argument that attacks no other argument and the foil is its direct attacker, then the contrastive explanation does not differ from the basic explanation. However, when the foil is chosen well, the contrastive explanations can highlight the difference between fact and a specific possible scenario, case or exception, as we have shown in our running example. 

For this paper we chose to highlight a particular form of contrast in the context of formal argumentation, i.e., the difference between an accepted [resp.\ non-accepted] argument and a (set of) non-accepted [resp.\ accepted] argument(s). One could interpret our approach as modeling the alternative contrastive explanations from~\cite{Miller18}. There are however other forms of contrast that could be highlighted in a contrastive explanation as well. For example, in terms of accepted arguments, one could compare two accepted but different arguments and return as the explanation the difference in their acceptance explanation; or explain why an argument is $\cup$-accepted but not $\cap$-accepted. 

\medskip

{\bf A note on the computational complexity.} The main complexity bottleneck of our approach is the assumption that the extensions and, by extension, the (non-)acceptance of the considered argument/formula are known. See e.g.,~\cite{DvorakD2018Handbook} for an overview of the complexity of these tasks. The supplementary material of~\cite{BorgB20basic} provides a naive, polynomial time, depth-first-search algorithm for the computation of the basic explanations that requires that the extensions and acceptance status are known. The same algorithm could be applied here as well, since the contrastive explanations are constructed from the basic explanations. 

\medskip

In future work we plan to further investigate different forms of contrastiveness and model these in the context of explanations for argumentation-based conclusions as well as a further study into the choice of the foil and its effect on the explanations. We will also integrate further findings from the social sciences on how humans request, generate, select, interpret and evaluate explanations~\cite{Miller19,SamekWM17explainable,SamekMVHM19towardsXAI}. Moreover, we will study how research on argumentation that aims at modeling other aspects of human reasoning can be applied in our study of explanations (e.g., the use of preferences~\cite{BeirlaenHPS18} or the instantiation with a specific logic, see e.g.,~\cite{BesnardH18handbook}). Finally, we plan to do a user study, to assess the relevance, appropriateness, significance of our proposed explanations.

\section*{Acknowledgements}
This research has been partly funded by the Dutch Ministry of Justice and the Netherlands Police.

\bibliographystyle{plain} 
\bibliography{literature}


\end{document}